\newtheorem{proposition}{Proposition}
\renewcommand\backslash{\reflectbox{\rotatebox[origin=c]{180}{\texttt{/}}}}
\newcolumntype{P}[1]{>{\RaggedRight\hspace{0pt}}p{#1}}
\newcommand{\changed}[1]{\textcolor{purple}{#1}}
\renewcommand{\changed}[1]{{#1}}
\newcommand{\specialcell}[2][c]{%
  \begin{tabular}[#1]{@{}c@{}}#2\end{tabular}}
\begin{document}
\title{Generic Axiomatization of Families of Noncrossing Graphs in
  Dependency Parsing}

\author{Anssi Yli-Jyr\"a \\ University of Helsinki, Finland \\ {\tt anssi.yli-jyra@helsinki.fi}
  \And  Carlos G\'omez-Rodr\'\i{}guez \\ Universidade da Coru\~na, Spain \\
 {\tt carlos.gomez@udc.es} }
\date{}
\maketitle
\begin{abstract}
  We present a simple encoding for unlabeled noncrossing graphs and
  show how its latent counterpart helps us to represent several
  families of directed and undirected graphs used in syntactic and
  semantic parsing of natural language as context-free languages.  The
  families are separated purely on the basis of forbidden patterns in
  latent encoding, eliminating the need to differentiate the families of
  non-crossing graphs in inference algorithms: one algorithm works for all
  when the search space can be controlled in parser input.
\end{abstract}

\section{Introduction}

Dependency parsing has received wide attention in recent years, as accurate and efficient dependency parsers have appeared that are applicable to many languages. Traditionally, dependency parsers have produced syntactic analyses in tree form, including exact inference algorithms that search for maximum projective trees \cite{Eisner} and
maximum spanning trees \cite{McDonald} in weighted digraphs, as well as greedy and beam-search approaches that forgo exact search for extra efficiency \cite{Zhang2011rich}.

Recently, there has been growing interest in providing a richer analysis of natural language by going beyond trees. In semantic dependency parsing \cite{oepen-EtAl:2015:SemEval,KuhOe2016}, the desired syntactic representations can have in-degree greater than 1 (re-entrancy), suggesting the search for maximum acyclic subgraphs \cite{Schluter2014,Schluter2015}. As this inference task is intractable \cite{Guruswami-etal-2011}, noncrossing digraphs have been studied instead, e.g. by \citet{Kuhlmann-Jonsson-2015} who provide a $O(n^3)$ parser for maximum noncrossing acyclic
subgraphs.

\citet{DBLP:journals/ijfcs/Yli-Jyra05} studied how to
axiomatize dependency trees as a special case of noncrossing digraphs.  This gave rise to a new homomorphic representation of context-free languages that proves the classical Chomsky and Sch\"utzenberger
theorem using a quite different internal language.  In this language, the brackets indicate arcs in a dependency tree in a way that is reminiscent to encoding schemes used earlier by  \citet{Greibach} and \citet{DBLP:journals/coling/Oflazer03}.  Cubic-time parsing algorithms that are incidentally or intentionally applicable to this kind of homomorphic representations have been considered, e.g., by
\citet{Nederhof-2003}, \citet{Hulden2011}, and \citet{DBLP:conf/birthday/Yli-Jyra12}.

Extending these insights to arbitrary noncrossing digraphs, or to relevant families of them, is far from obvious. In this paper, \changed{we develop (1) a \emph{linear encoding} supporting general noncrossing digraphs, and (2) show that the encoded noncrossing digraphs form a \emph{context-free language}.   We then give it (3) two  \emph{homomorphic, nonderivative representations} and use the latent local features of the latter to characterize various families of digraphs.}  

\changed{Apart from the obvious relevance to the theory of context-free languages, this contribution has the practical potential to enable (4) \emph{generic context-free parsers} that produce different families of non-crossing graphs with the same set of inference rules while the search space in each case is restricted with lexical features and the grammar.}

\paragraph{Outline}

After some background on graphs and parsing as inference (Section 2), we use an \textbf{ontology of digraphs} to illustrate natural families of noncrossing digraphs in Section 3. We then develop, in Section 4, the first \textbf{latent context-free representation} for the set of noncrossing digraphs, then extended in Section 5 with additional latent states supporting our \textbf{finite-state axiomatization} of digraph properties, and allowing us to control the search space using the lexicon.  
The \textbf{experiments} in Section 6 cross-validate our axioms and sample the growth of the constrained search spaces. Section 7 outlines the applications for practical parsing, and Section 8 concludes.

\section{Background}

\paragraph{Graphs and Digraphs}

A \emph{graph} is a pair $(V,E)$ where $V$ is a finite set of vertices and $E \subseteq \{\{u,v\}\subseteq V\}$ is a set of edges.  A sequence of edges of the form $\{v_0,v_1\},$ $\{v_1,v_2\},$ $... ,$ $\{v_{m-1},v_m\}$, with no repetitions in $v_1,...,v_m$, is a \emph{path} between vertices $v_0$ and $v_m$ and \emph{empty} if $m=0$.  A graph is a \emph{forest} if no vertex has a non-empty path to itself and \emph{connected} if all pairs of vertices have a path.  A \emph{tree} is a connected forest.

A \emph{digraph} is a pair $(V,A)$ where
$A \subseteq V \times V$ is a set of arcs $u {\;\rightarrow\;} v$, thus a \emph{directed} graph.  
Its \emph{underlying graph}, $(V,E_A)$, has edges $E_A=\{\{u,v\} \mid (u,v) \in A\}$.  A sequence of arcs $v_0 \rightarrow v_1, v_1 \rightarrow v_2, ... , v_{m-1} \rightarrow v_m$, with no repetitions in $v_1,\dots,v_m$, is a \emph{directed path}, and \emph{empty} if $m=0$.  

A digraph without self-loops $v \rightarrow v$ is \emph{loop-free} (property \textbf{DIGRAPH$_\textrm{LF}$}). We will focus on loop-free digraphs unless otherwise specified, and denote them just by \textbf{DIGRAPH} for brevity. A digraph is d-acyclic (\textbf{ACYC}$_\textrm{D}$), aka a \emph{dag} if no vertex has a non-empty directed path to itself, \emph{u-acyclic} (\textbf{ACYC}$_\textrm{U}$) aka a \emph{m(ixed)-forest} if its underlying graph is a forest, and \emph{weakly connected} (w.c., \textbf{CONN}$_\textrm{W}$) if its underlying graph is connected.

\paragraph{Dependency Parsing}

The \emph{complete digraph $G_S(V,A)$ of a sentence} $S=x_1...x_n$ 
consists of vertices $V=\{1,...,n\}$ and all possible arcs \changed{$A=V\times V - \{(i,i)\}$}.  The vertex $i\in V$ corresponds to the word $x_i$ and the arc $i\rightarrow j \in A$ corresponds to a possible dependency between the words $x_i$ and $x_j$.  

The task of \emph{dependency parsing} is to find a constrained subgraph $G_S'(V,A')$ of the complete digraph $G_S$ of the sentence.  The standard solution is a rooted directed tree called a \emph{dependency tree} or a dag called a \emph{dependency graph}.

\paragraph{Constrained Inference}

In \emph{arc-factored parsing} \cite{McDonald}, each possible arc $i\rightarrow j$ is equipped with a positive weight $w_{ij}$, usually computed as a weighted sum $w_{ij} = \mathbf{w}\cdot \Phi(S,i\rightarrow j)$ where $\mathbf{w}$ is a weight vector and 
$\Phi(\mathbf{x},i\rightarrow j)$ a feature vector extracted from the sentence $\mathbf{x}$, considering the dependency relation from word $x_i$ to word $x_j$. Parsing then consists in finding an arc subset $A'\subseteq A$ that gives us a constrained subgraph $(V,A') \in \text{Constrained}(V,A)$ of the complete digraph $(V,A)$ with maximum sum of arc weights:
\[
(V,A') = \underset{(V,A')\;\in\; \text{Constrained}(V,A)}{\arg\max}\ \ \sum_{i\rightarrow j\in A'} w_{i,j}.
\]
The complexity of this inference task depends on the constraints imposed on the subgraph. Under no constraints, we simply set $A'=A$. Inference over dags is intractable \cite{Guruswami-etal-2011}.  
Efficient solutions are known for projective trees \cite{Eisner96}, various classes of mildly non-projective trees \cite{GomCL2016}, unrestricted spanning trees \cite{McDonald}, and both unrestricted and weakly connected noncrossing dags \cite{Kuhlmann-Jonsson-2015}. 

\paragraph{Parsimony}

Semantic parsers must be able to produce more than projective trees
because the share of projective trees is pretty low (under $3\%$) in semantic graph banks \cite{Kuhlmann-Jonsson-2015}.  However, if we know that the parses have some restrictions, it is better to use them to restrict the search space as much as possible.  

There are two strategies for reducing the search space.  One is to develop a specialized inference algorithm for a particular natural language or family of dags, such as weakly connected graphs \cite{Kuhlmann-Jonsson-2015}.   The other strategy is to control the local complexity of digraphs through lexical categories \cite{Baldridge:2003:MCC:1067807.1067836} or equivalent mechanisms.  This strategy produces a more sensitive model of the language, but requires a principled insight on how the complexity of digraphs can be characterized.

\section{Constraints on the Search Space}

We will now present a classification of digraphs on the basis of their formal properties.

\paragraph{The Noncrossing Property}

For convenience, graphs and digraphs may be ordered like in a complete digraph of a sentence.  Two edges $\{i,j\}$, $\{k,l\}$ in an ordered graph or arcs $i\rightarrow j,k\rightarrow l$ in an
ordered digraph are said to be \emph{crossing} if $\min\{i,j\} < \min\{k,l\} < \max\{i,j\} < \max\{k,l\}$.  A graph or digraph is \emph{noncrossing} if it has no crossing edges or arcs. Noncrossing (di)graphs ($\textbf{NC-(DI)GRAPH}$) are the largest possible (di)graphs that can be drawn on a circle without crossing arcs.  In the following, we assume that all digraphs and graphs are noncrossing. 

\changed{An arc $x \rightarrow y$ is \emph{(properly) covered by} an arc $z \rightarrow t$ if ($\{x,y\}\ne\{z,t\}$) and $\min\{z,t\} \le \min\{x,y\} \le \max\{x,y\} \le \max\{z,t\}$.}

\paragraph{Ontology}

Fig. \ref{lattice} presents an ontology of such families of loop-free noncrossing digraphs \changed{that can be distinguished by digraphs with 5 vertices}.   

In the digraph ontology, a \emph{multitree} \changed{aka mangrove} is a dag with the property of being \emph{strongly unambiguous} (\textbf{UNAMB}$_\textrm{S}$), which asserts that, \changed{given two distinct vertices, there is at most one repeat-free path between them \citep{Lange-1997}.\footnote{A different definition forbids diamonds as minors.}}  A \emph{polytree} \changed{\citep{Rebane-Pearl-1987}} is a \changed{multitree} whose underlying graph is a tree. The \emph{out} property (\textbf{OUT}) of a digraph $(V,E)$ means that no vertex $i\in V$ has two incoming arcs $\{j,k\}\rightarrow i$ s.t. $j\ne k$. 

\newcommand\info[2]{\begin{tabular}{c}#1\\[-1ex]\tiny #2\\[-1ex]\end{tabular}}
\begin{figure}[ht]\centering
\scalebox{.6}{
\!\!\!\!\!\!\!\begin{tikzpicture}[node distance=1.93cm]

\node(X) {\info{\text{NC-DIGRAPH}}{+5460}};
\node(C) [below left  of=X] {\ \ \ \ \info{\textbf{CONN}$_\textrm{W}$}{+43571}};
\node(U) [left of=C] {\info{\textbf{UNAMB}$_\textrm{S}$}{+80}\ };
\node(D) [right of=C] {\ \ \ \ \ \ \info{\textbf{ORIENTED}}{+140}};
\node(UU) [right of=D] {};
\node(UUU) [right of=UU] {};

\draw[dashed](X) -- (C);
\draw(X) -- (U);
\draw(X) -- (U);
\draw(X) -- (D);
\draw(X) -- (D);

\node(F) [below left  of=U] {\info{\textbf{ACYC}$_\text{U}$}{+1200}};
\node(O) [left of=F] {\info{\textbf{OUT}}{+10}\ \ \ };
\node(UC) [right of=F] {\info{w.c.unamb.}{+600}};
\node(CD1) [right of=UC] {};
\node(CD) [right of=CD1] {\info{w.c.or.}{+1160}};
\node(DU) [left of=O] {\ \ \info{unamb.or.}{+80}};
\node(DUX) [below of=UUU] {};
\node(OO) [right of=CD] {};
\node(OOO) [right of=OO] {};
\node(O4) [below right of=OO] {};
\node(A) [right of=CD] {\info{\textbf{ACYC}$_\textrm{D}$}{+840}\ \ \ };

\draw(U) -- (O);
\draw(U) -- (O);
\draw(U) -- (F);
\draw(U) -- (F);
\draw[dashed](U) -- (UC);
\draw[dashed](C) -- (UC);
\draw[dashed](C) -- (CD);
\draw[dashed](D) -- (CD);
\draw(D) -- (A);
\draw(D) -- (A);

\node(OD) [below left of=O] {\info{out oriented\ \ \ \ \ \ \ }{+130}};
\node(DD) [below right of=X] {};
\draw[dotted](DD) -- (DU);
\draw(U) -- (DU);
\draw(U) -- (DU);
\draw(DU) -- (OD);
\draw(DU) -- (OD);
\draw(O) -- (OD);

\node(OF) [below left of=F] {\info{out m-forest}{+435}\ \ };

\node(FC) [right of=OF] {\info{\ \ \ \ \ mixed tree}{+3355}};
\node(AU) [below left of=A] {\ \ \ \info{multitree}{+10}};
\node(CA) [right of=AU] {\ \ \info{w.c.dag}{+2960}};
\node(UCD5) [below right of=CD] {};
\node(UCD4) [left of=UCD5] {\info{\ \  \ \ \ \ \ \ \ \ }{}};
\node(UCD) [below left of=CD] {\info{\ \ \ w.c.unamb.or.\ \ }{+370}\ \ \ };

\draw(O) -- (OF);
\draw(O) -- (OF);
\draw(F) -- (OF);
\draw(F) -- (OF);
\draw[dashed](F) -- (FC);
\draw[dashed](UC) -- (FC);
\draw[dashed](UC) -- (UCD);
\draw[dashed](CD) -- (UCD);
\draw[dashed](CD) -- (CA);
\draw(A) -- (CA);
\draw(A) -- (CA);
\draw(A) -- (AU);
\draw(A) -- (AU);
\draw(U) -- (AU);
\draw(U) -- (AU);

\node(OFC) [below left of=FC] {\info{out mixed tree}{+220}\ \ \ };
\draw[dashed](OF) -- (OFC);
\draw[dashed](OF) -- (OFC);
\draw[dashed](FC) -- (OFC);

\node(UCD2) [below left of=UCD] {};
\node(UCD3) [below left of=UCD2] {};
\node(OCD) [below left of=OFC] {\info{w.c. out oriented}{+132}};
\draw(OD) -- (OCD);
\draw(OD) -- (OCD);
\draw[dotted](UCD4) -- (OCD);

\node(CAU) [below left of=CA] {\ \ \ \ \ \ \info{w.c.multitree}{+50}};
\draw(CA) -- (CAU);
\draw(CA) -- (CAU);
\draw[dashed](UCD) -- (CAU);
\draw(AU) -- (CAU);
\draw(AU) -- (CAU);
\node(FAU) [left of=CAU] {\info{or.forest\ \ \ \ \ }{+300}};
\draw(F) -- (FAU);
\draw(F) -- (FAU);
\draw(AU) -- (FAU);
\draw(AU) -- (FAU);

\node(FCAU) [below left of=CAU] {\ \ \ \ \ \ \info{polytree}{+605}};
\draw(CAU) -- (FCAU);
\draw(CAU) -- (FCAU);
\draw(FAU) -- (FCAU);
\draw(FAU) -- (FCAU);
\draw[dashed](FC) -- (FCAU);

\node(OFAU) [left of=FCAU] {\info{out or.forest\vphantom{j}}{+481}\ \ \ };
\draw(OF) -- (OFAU);
\draw(OF) -- (OFAU);
\draw(FAU) -- (OFAU);
\draw(FAU) -- (OFAU);

\node(OFCAU) [below left of=FCAU] {\info{out or.tree}{+275}};
\draw(OFAU) -- (OFCAU);
\draw(OFAU) -- (OFCAU);
\draw(FCAU) -- (OFCAU);
\draw(FCAU) -- (OFCAU);
\draw(OFC) -- (OFCAU);
\draw(OFC) -- (OFCAU);
\draw(OCD) -- (OFCAU);
\draw(OCD) -- (OFCAU);

\end{tikzpicture}}\\[-2ex]
\caption{\label{lattice}\changed{Basic properties split the set of 62464 noncrossing digraphs for 5 vertices into 23 classes}}
\end{figure}
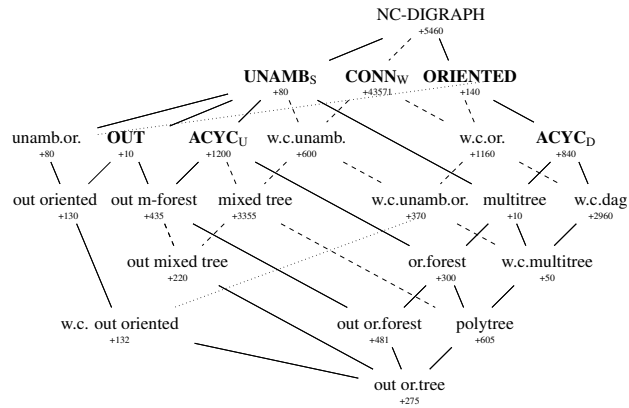

\newcommand\PROJW{\textbf{PROJ}$_\textrm{W}$}

\changed{An ordered digraph is \emph{weakly projective} (\PROJW) if for all vertices $i$, $j$ and $k$, if $k\rightarrow j \rightarrow i$, then either $\{i,j\} < k$ or $\{i,j\} > k$. In other words, the constraint, aka the \emph{outside-to-inside constraint} \citep{DBLP:journals/ijfcs/Yli-Jyra05}, states that no outgoing arc of a vertex properly covers an incoming arc.  This is implied by a stronger constraint known as \emph{Harper, Hays, Lecerf and Ihm projectivity} \citep{Marcus1967}.}

We can embed the ontology of graphs (unrestricted, connected, forests and trees) into the ontology of digraphs by viewing an undirected graph $(V,E)$ as an inverse digraph $(V,\{(i,j),(j,i) \mid \{i,j\} \in E\})$.   This kind of digraph has an \emph{inverse property} (\textbf{INV}).  Its opposite is an \emph{oriented (or.)} digraph $(V,A)$ where $i\rightarrow j\in A$ implies $j\rightarrow i \notin A$ (defines the property \textbf{ORIENTED}).
Out forests and trees are, by convention, oriented digraphs with an underlying forest or tree, respectively.

\paragraph{Distinctive Properties}
A few important properties of digraphs are local and can be verified by inspecting each vertex separately with its incident arcs.  These include (i) the out property (\textbf{OUT}), (ii)  the nonstandard projectivity property (\PROJW), (iii) the inverse property (\textbf{INV}) and (iv) the orientedness (or.) property.  

Properties \textbf{UNAMB}$_\textrm{S}$, \textbf{ACYC}$_\textrm{D}$, \textbf{CONN}$_\textrm{W}$, and \textbf{ACYC}$_\textrm{U}$ are nonlocal properties of digraphs and cannot be generally verified locally, through finite spheres of vertices \cite{Gradel:2005:FMT:1206819}.  
The following proposition covers the configurations that we have to detect in order to decide the nonlocal properties of noncrossing digraphs.

\begin{proposition}\label{prop}
Let $G=(V,E)$ be a noncrossing digraph.  
\begin{itemize}\setlength\itemsep{0ex}
\item If $G \notin \textbf{UNAMB}_\textrm{S}$, then the digraph contains one of the following \changed{four} configurations or their reversals:\\[-2ex]
\hfill $\scalebox{.8}{
    \begin{dependency}[theme = simple, edge style=->]
      \begin{deptext}[column sep=1em]
        \small u \& \small v \& \small y   \\
  \end{deptext}
      \depedge[edge style=dashed,arc angle=50,edge style=->]{1}{2}{}
      \depedge[edge style=dashed,edge start x offset=1pt,edge style=<-,arc angle=50]{3}{2}{}
      \depedge[edge start x offset=-4pt,style=->]{1}{3}{}
\end{dependency}}\!\!\!$%
$\scalebox{.8}{
    \begin{dependency}[theme = simple, edge style=->]
      \begin{deptext}[column sep=1em]
        \small u \& \small v \& \small y   \\
  \end{deptext}
      \depedge[edge style=dashed,arc angle=50,edge style=<-]{1}{2}{}
      \depedge[edge style=dashed,edge start x offset=1pt,edge style=<-,arc angle=50]{3}{2}{}
      \depedge[edge start x offset=-4pt,style=->]{1}{3}{}
\end{dependency}}\!\!\!$%
$\scalebox{.8}{
    \begin{dependency}[theme = simple, edge style=->]
      \begin{deptext}[column sep=1em]
        \small u \& \small v \& \small y  \& \small 
\\
  \end{deptext}
      \depedge[edge style=dashed,arc angle=50,edge style=->]{1}{2}{}
      \depedge[edge style=dashed,edge start x offset=-3pt,edge style=<-,arc angle=50]{2}{3}{}
      \depedge[edge start x offset=-3pt,style=->]{1}{3}{}
\end{dependency}}\!\!\!\!\!\!\!\!\!\!\!\!$%
$\scalebox{.8}{
    \begin{dependency}[theme = simple, edge style=->]
      \begin{deptext}[column sep=1em]
        \small u \& \small v \& \small x \& \small y \& \small 
\\
  \end{deptext}
      \depedge[edge style=dashed,arc angle=50,edge style=<-]{1}{2}{}
      \depedge[edge style=dashed,edge start x offset=1pt,edge style=<-,arc angle=50]{3}{2}{}
      \depedge[edge style=dashed,arc angle=50,edge style=->]{4}{3}{}
      \depedge[edge start x offset=-4pt,style=->]{1}{4}{}
\end{dependency}}$\\[-4ex]%

\item If $G \notin \textbf{ACYC}_\textrm{D}$, then
the graph contains one of the \changed{configurations}\\[-1ex]
\hfill $\scalebox{.8}{
    \begin{dependency}[theme = simple, edge style=->]
      \begin{deptext}[column sep=1em]
        \small u \& \small v \& \small y   \\
  \end{deptext}
      \depedge[edge style=dashed,arc angle=50,edge style=->]{1}{2}{}
      \depedge[edge style=dashed,edge start x offset=1pt,edge style=<-,arc angle=50]{3}{2}{}
      \depedge[edge start x offset=-4pt,style=<-]{1}{3}{}
\end{dependency}}\ \ 
\scalebox{.8}{
    \begin{dependency}[theme = simple, edge style=<-]
      \begin{deptext}[column sep=1em]
        \small u \& \small v \& \small y   \\
  \end{deptext}
      \depedge[edge style=dashed,arc angle=50,edge style=<-]{1}{2}{}
      \depedge[edge style=dashed,edge start x offset=1pt,edge style=->,arc angle=50]{3}{2}{}
      \depedge[edge start x offset=-4pt,style=->]{1}{3}{}
\end{dependency}}\ 
\scalebox{.8}{
    \begin{dependency}[theme = simple, edge style=->]
      \begin{deptext}[column sep=1em]
        \small u \& \small v   \& \small \\
  \end{deptext}
      \depedge[arc angle=90,edge style=<->]{1}{2}{}
\end{dependency}}$\\[-4ex]

\item If $G \notin \textbf{ACYC}_\textrm{U}$, then
the underlying graph contains the following configuration:\\[-1ex]
$\scalebox{.8}{
    \begin{dependency}[theme = simple, edge style=->]
      \begin{deptext}[column sep=1em]
        \small u \& \small v \& \small y \& \small \\
  \end{deptext}
      \depedge[edge style=dashed,arc angle=50,edge style=-]{1}{2}{}
      \depedge[edge style=dashed,edge start x offset=1pt,edge style=-,arc angle=50]{3}{2}{}
      \depedge[edge start x offset=-4pt,style=-]{1}{3}{}
\end{dependency}}$\\[-4ex]
\item If $G \notin \textbf{CONN}_\textrm{W}$, then
the underlying graph contains one of the following configurations:\\
$\scalebox{.8}{
    \begin{dependency}[theme = simple, edge style=->]
      \begin{deptext}[column sep=1em]
        \small ... \& \small v \ \ \& \small y \& \small \ \ ...   \\
  \end{deptext}
      \depedge[edge style=dotted,arc angle=50,edge style=-]{1}{2}{\normalsize   no arc\phantom{xxxx}}
      \depedge[edge style=dashed,edge style=-]{2}{3}{}
      \depedge[edge style=dotted,arc angle=50,edge style=-]{3}{4}{\normalsize \phantom{xxxx}no arc}
\end{dependency}}\ \ \ \ \ 
\scalebox{.8}{
    \begin{dependency}[theme = simple, edge style=->]
      \begin{deptext}[column sep=1em]
        \small ... \& \small v \& \small \ \ ...   \\
  \end{deptext}
      \depedge[edge style=dotted,arc angle=50,edge style=-]{1}{2}{\normalsize no arc\phantom{xxxx}}
      \depedge[edge style=dotted,arc angle=50,edge style=-]{2}{3}{\normalsize \phantom{xxxx}no arc}
\end{dependency}}$\\[-3ex]
\end{itemize}
\end{proposition}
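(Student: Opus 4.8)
The plan is to prove the four implications separately, but all by the same template: assume the global property fails, extract a \emph{minimal} substructure witnessing the failure (a cycle; a directed cycle; a pair of distinct directed paths; a proper connected component), and then use noncrossingness to force that substructure into one of the displayed local shapes. The geometric core is a single observation about noncrossing cycles, which I would isolate as a lemma: if $C$ is a cycle of the underlying graph and $e=\{p,q\}$ with $p<q$ is an edge of $C$ whose span $q-p$ is maximal among the edges of $C$, then $p=\min V(C)$ and $q=\max V(C)$; since $|V(C)|\ge 3$ there is therefore a vertex $v\in V(C)$ with $p<v<q$, and $C$ with $e$ removed is a Hamiltonian path of $C$ from $p$ to $q$ passing through $v$. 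To prove $p=\min V(C)$: were there $w\in V(C)$ with $w<p$, follow the $p$-avoiding arc of $C$ from $w$ to $q$, let $d$ be its first vertex with $d\ge p$ (so $d>p$) and $c$ the preceding vertex (so $c<p$); then $\{c,d\}$ is an edge of $C$ with $c<p<d$, so it either crosses $e$ (when $d<q$) or has strictly larger span than $e$ (when $d\ge q$), a contradiction either way, and $q=\max V(C)$ is the mirror image.

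The two acyclicity cases are then short. If the underlying graph is not a forest it has a cycle $C$; by the lemma we may write $C$ as a path $u$–$v$, followed by a path $v$–$y$, closed by the chord $\{u,y\}$, with $u<v<y$, which is exactly the displayed \textbf{ACYC}$_\textrm{U}$ configuration. If a digraph has a directed cycle it has a simple one: a simple directed $2$-cycle is the displayed two-vertex configuration, while a simple directed cycle of length $\ge 3$ is noncrossing, the lemma applies, and the Hamiltonian path through $v$ inherits an orientation fixed by the direction of $e$ — according as $e$ is the arc $y\!\to\! u$ or $u\!\to\! y$ one reads off the first or the second displayed \textbf{ACYC}$_\textrm{D}$ configuration with $u=p$, $y=q$, and these two are reversals of each other.

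The substantial case is \textbf{UNAMB}$_\textrm{S}$. Suppose $G$ is not strongly unambiguous; among all pairs of distinct directed paths $P\ne P'$ joining some pair $s,t$, pick one minimising $|E(P)|+|E(P')|$. Splicing $P$ and $P'$ at a common internal vertex would give a strictly shorter such pair, so $P$ and $P'$ are internally vertex-disjoint and $C=P\cup P'$ is a cycle, necessarily with $|V(C)|\ge 3$ since two distinct directed $s$–$t$ paths cannot both be single arcs; on $C$, the vertex $s$ is the unique ``double source'' and $t$ the unique ``double sink''. Apply the lemma: the maximal-span edge $e$ joins $\min V(C)$ to $\max V(C)$, there is a covered vertex $v$, and $C$ with $e$ removed is a Hamiltonian path through $v$. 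What remains is a finite case split, by the direction of $e$ and by whether each of $s,t$ is an endpoint of $e$ (i.e.\ an extreme vertex) or an interior vertex of $C\setminus e$: up to the left--right mirror and reversal of all arcs the surviving cases are — $\{s,t\}=\{\min V(C),\max V(C)\}$, which forces one of $P,P'$ to be the single arc $e$ (first configuration); exactly one of $s,t$ extreme, with $v$ trapped between the extremes (second and third configurations, reversals of each other); and neither of $s,t$ extreme, so the long path leaves the interval spanned by $s$ and $t$ on both ends through the two extreme vertices (fourth configuration, whose extra named vertex $x$ marks where this long path re-enters that interval). Checking that these four, up to reversal, are genuinely all the possibilities and that each coincides with one of the drawn pictures is the step I expect to be the main obstacle.

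Finally, for \textbf{CONN}$_\textrm{W}$: in a noncrossing graph the intervals $[\min K,\max K]$ of the connected components $K$ form a laminar family — two properly overlapping such intervals would, through gap-covering edges on the spanning paths of the two components, produce a pair of crossing edges. If $G$ is disconnected, choose a component $K$ whose interval is minimal for inclusion; if $|V(K)|=1$ this is the isolated-vertex (second) configuration, and otherwise a path inside $K$ between $\min K$ and $\max K$, together with the fact that (being a component in a noncrossing graph) $K$ has no edge to a vertex left of $\min K$ nor right of $\max K$, is the first configuration, after the left--right mirror if needed and reading the flanking ``$\dots$'' as possibly empty on a boundary side. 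This last bookkeeping is routine, so the four-way case analysis in the \textbf{UNAMB}$_\textrm{S}$ case is the only genuinely demanding part of the argument.
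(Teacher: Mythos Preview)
The paper does not prove this proposition. After stating it, the authors add only the one-line gloss that ``any underlying cycle containing more than 2 arcs consists of one covering arc and a linear chain of edges between its end points,'' and in Section~6 they report that the resulting constraint languages were cross-validated by brute-force enumeration of digraphs for small $n$. There is no written argument to compare your attempt against.

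Your plan is precisely a formalization of that one-line gloss: your lemma (the maximal-span edge of a noncrossing cycle joins its extreme vertices, the remainder being a Hamiltonian path through an interior vertex) is exactly the ``covering arc plus chain'' structure the paper invokes, and your proof of it is correct. The $\mathbf{ACYC}_{\mathrm U}$ and $\mathbf{ACYC}_{\mathrm D}$ cases then drop out as you say. For $\mathbf{CONN}_{\mathrm W}$ your laminarity claim for component intervals is true, though the sentence ``gap-covering edges \dots\ produce a pair of crossing edges'' hides a short iteration: the first gap-covering edges you pick on the two spanning paths need not cross each other directly, and you may have to pass to a shorter overlapping pair and repeat. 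For $\mathbf{UNAMB}_{\mathrm S}$ your minimal-pair reduction to an internally-disjoint cycle with a unique double source $s$ and double sink $t$ is sound, and organizing the case split by which of $s,t$ coincide with the extremes $\{p,q\}$ is the right move; the residual matching of the subcases to the four drawn pictures and their reversals is, as you anticipate, bookkeeping. One point you do not state but which the paper's phrase ``linear chain'' silently assumes, and which its later encoding actually needs, is that the Hamiltonian path $C\setminus e$ is left-to-right \emph{monotone}; this also follows from noncrossingness (a local extremum on the path forces a later edge crossing an earlier one) and is worth recording even though Proposition~1 as stated does not require it.
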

Proposition \ref{prop} gives us a means to implement the property tests in practice.  It tells us intuitively that although the paths can be arbitrarily long, any underlying cycle containing more than 2 arcs consists of one covering arc and a linear chain of edges between its end points.

\section{The Set of Digraphs as a Language}

In this section, we show that the set of noncrossing digraphs is isomorphic to an unambiguous context-free language over a bracket alphabet.

\subsection{Basic Encoding}

Any noncrossing ordered graph $([1,...,n],E)$, even with self-loops, can be encoded
as a string of brackets using the algorithm \texttt{enc} in 
Fig. \ref{encdec}.  For example, the output for the ordered graph
\\
{\centering\scalebox{.7}{{
    \begin{dependency}[theme = simple, edge style=-]
      \begin{deptext}[column sep=1em]
        1 \& \selfloop{2} \& 3  \& 4
        \\
  \end{deptext}
      \depedge{1}{2}{}
      \depedge{2}{4}{}
      \depedge{1}{4}{}
  \end{dependency}}
} \ \ \ \ \ \raisebox{2.5ex}{\small\ensuremath{\begin{matrix}
n = 4, \ \ \ 
E = \left\{\begin{matrix}\{1,2\},&\{2,2\}\\
                         \{2,4\},&\{1,4\}\end{matrix}
    \right\}\end{matrix}}}
}\\ 
is the string \texttt{\small{} [[\{\}][[]\{\}\{\}]]}. Intuitively, pairs of brackets of the form \texttt{\small{\{\}}} can be interpreted as spaces between vertices, and then each set of matching brackets \texttt{\small{[...]}} encodes an arc that covers the  spaces represented inside the brackets.

\begin{figure}[t]
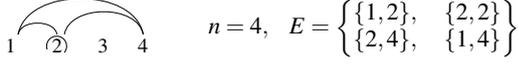

\centering
{\scriptsize\begin{verbatim}
func enc(n,E):                func dec(stdin):
  for i in [1,...,n]:           n = 1; E = {}; s = []
     for j in [i-1,...,2,1]:    while c in stdin:
        if {j,i} in E:             if c == "[":
           print "]"                  s.push(n)
     for j in [n,n-1,...,i+1]:     if c == "]":   
        if {i,j} in E:                i = s.pop()
           print "["                  E.insert((i,n)) 
     if {i,i} in E:                if c == "{":   
        print "[]"                    n = n + 1
     if i<n:                    return (n,E)
        print "{}"                          
\end{verbatim}}\vspace{-2ex}
\caption{\label{encdec}\changed{The encoding and decoding algorithms}}
\end{figure}
Any noncrossing ordered digraph $([1,\dots,n],A)$ 
can be encoded with slight modifications to the algorithm.  
Instead of printing \texttt{\small [ ]} for an edge $\{i,j\}\in E_A$, $i\le j$,
the algorithm should now print 
\[
\textrm{\small\begin{tabular}{cl}
    \texttt{\small / >} & if $(i,j)\in A$,$(j,i)\not\in A$; \\
    \texttt{\small < \backslash} & if $(i,j)\notin A$,$(j,i)\in A$; \\
    \texttt{\small [ ]} & if $(i,j),(j,i) \in A$.
\end{tabular}}
\]
In this way, we can simply encode the digraph
$(\{1,2,3,4\},\{(1,2),(2,2),(4,1),(4,2)\})$ as the string
\texttt{\small </\{\}><[]\{\}\{\}\backslash\backslash}.

\begin{proposition}
  The encoding respects concatenation where the
adjacent nonempty operands have a common vertex.
\end{proposition}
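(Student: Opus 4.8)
The plan is to trace the output of \texttt{enc} vertex by vertex and argue that identifying the last vertex of one operand with the first vertex of the next affects the emitted string only at that shared vertex.

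Fix noncrossing ordered (di)graphs $G_1=([1,\dots,p],E_1)$ and $G_2=([1,\dots,q],E_2)$ with $p,q\ge 1$, and let $G=([1,\dots,p+q-1],E)$ be obtained by gluing vertex $p$ of $G_1$ to vertex $1$ of $G_2$, i.e.\ $E=E_1\cup\{\{a+p-1,b+p-1\}\mid\{a,b\}\in E_2\}$, and analogously for arcs in the digraph variant. I would first record two structural facts. (i) $G$ is again noncrossing: every edge of $G$ lies in the interval $[1,p]$ or in $[p,p+q-1]$, and edges taken from intervals overlapping in at most one point cannot cross, since a crossing requires strictly interleaved endpoints. (ii) For a vertex $i<p$ the edges incident to $i$ all lie in $[1,p]$ and coincide with those incident to $i$ in $G_1$; for $i>p$ they all lie in $[p,p+q-1]$ and coincide, after shifting indices by $p-1$, with those incident to $i-p+1$ in $G_2$; only the shared vertex $p$ is incident to edges on both sides.

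Next I would split the three encodings into per-vertex blocks, the block of vertex $i$ being exactly what one pass of the outer loop prints: the closing symbols for edges from $i$ to smaller vertices (in decreasing order of the other endpoint), then the opening symbols for edges from $i$ to larger vertices (again decreasing), then the self-loop symbol if present, then the separator \texttt{\{\}} unless $i$ is the last vertex. Writing $\texttt{enc}(G)=B_1\cdots B_{p+q-1}$, $\texttt{enc}(G_1)=C_1\cdots C_p$, $\texttt{enc}(G_2)=D_1\cdots D_q$, the proposition reduces to three block identities: $B_i=C_i$ for $1\le i\le p-1$; $B_{p-1+i}=D_i$ for $2\le i\le q$; and $B_p=C_p D_1$. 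The first two follow directly from (ii): the membership tests inside the loop return the same answers as in $G_1$ (resp.\ $G_2$), the loop ranges differ only over vertices that contribute no symbol, and the test $i<n$ governing the separator comes out the same on both sides. The third identity, $B_p=C_p D_1$, is the crux: at vertex $p$, \texttt{enc} first prints the closers for the left-incident edges of $p$ — by (ii) exactly the left-incident edges of vertex $p$ in $G_1$, reproducing $C_p$ — then the openers for the right-incident edges of $p$ — exactly the right-incident edges of vertex $1$ in $G_2$, reproducing $D_1$ — then the self-loop symbol and the separator (the latter iff $q>1$), landing where $D_1$ puts them.

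The main obstacle is precisely this shared-vertex block. One must verify that the emission order wired into \texttt{enc} — all closers, then all openers, then the self-loop, then \texttt{\{\}} — makes the $G_1$-contribution end exactly where the $G_2$-contribution begins, so that $B_p$ factors cleanly as $C_p D_1$; and one must keep the separator honest, noting it is printed for vertex $p$ of $G$ iff $q>1$, matching its presence in $D_1$, so that no stray or missing \texttt{\{\}} appears at the seam. A last point of care: a self-loop at the shared vertex is, by the emission order, printed after the openers, so it has to be charged to the $G_2$ side (equivalently, the last vertex of $G_1$ carries no loop) for $B_p=C_p D_1$ to survive when vertex $1$ of $G_2$ has right-incident edges — a non-issue for loop-free digraphs, the paper's default setting. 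With the three identities in hand, concatenation gives $\texttt{enc}(G)=C_1\cdots C_{p-1}(C_p D_1)D_2\cdots D_q=\texttt{enc}(G_1)\,\texttt{enc}(G_2)$, and an induction on the number of operands yields the statement for any finite chain in which consecutive nonempty operands share a vertex. (The mirror statement for \texttt{dec}, that decoding $s_1 s_2$ glues $\texttt{dec}(s_1)$ and $\texttt{dec}(s_2)$ at a vertex, can be proved the same way by following the stack and the vertex counter, and is insensitive to the self-loop subtlety.)
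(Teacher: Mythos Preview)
The paper states this proposition without proof, so there is no authorial argument to compare against. Your argument is correct and more careful than the paper itself: the per-vertex block decomposition of \texttt{enc}, the observation that vertices strictly on one side of the seam see exactly the same incident edges and the same separator test as in the corresponding operand, and the factorization $B_p=C_pD_1$ at the shared vertex are all sound. Your handling of the separator at the seam (present in $D_1$ iff $q>1$, matching $B_p$) is exactly the point one must check.

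You also surface a genuine subtlety the paper does not mention: if the left operand carries a self-loop at its last vertex and the right operand has right-incident edges at its first vertex, then $C_pD_1$ places the \texttt{[]} before the openers while $B_p$ places it after, so the identity fails as stated for graphs with loops. Your resolution---charge any self-loop at the seam to the right operand, or simply restrict to the loop-free setting that the paper adopts as its default (\textbf{DIGRAPH}$_\textrm{LF}$)---is the right one, and in fact sharpens the proposition. The induction to more than two operands and the remark on \texttt{dec} are both fine.
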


\paragraph{Context-Freeness}

Arbitrary strings with balanced brackets form a context-free language that is known, \changed{generically}, as a Dyck language.  It is easy to see that the graphs $\textbf{NC-GRAPH}$ are encoded with strings that belong to the Dyck language $D_2$ generated by the context-free grammar: \changed{$S \rightarrow \texttt{[} S \texttt{]} S \mid \texttt{\{$S$\}} S\mid \epsilon$.}  The \emph{encoded graphs},  $L_{\textbf{NC-GRAPH}}$, are, however, generated exactly by the context-free grammar $S \rightarrow \texttt{[} S' \texttt{]}\; S \mid \texttt{\{\}}\; S \mid \epsilon$,\;  $S' \rightarrow \texttt{[} S' \texttt{]} \; T \mid \texttt{\{\}} \; S,\; T \rightarrow \texttt{[} S' \texttt{]}\; S\mid \texttt{\{\}} \; S$.  
\changed{This language is an \emph{unambiguous} context-free language.}

\begin{proposition}
The encoded graphs, $L_{\textbf{NC-GRAPH}}$, make an unambiguous context-free language.  \label{uacfl}
\end{proposition}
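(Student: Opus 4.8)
The statement packages two facts: the grammar $G$ just exhibited (nonterminals $S$, $S'$, $T$) generates exactly $L_{\textbf{NC-GRAPH}}$, and $G$ is unambiguous; since a language admitting an unambiguous grammar is by definition an unambiguous context-free language, the second fact, granted the first, is the real content. The plan is to prove the stronger property that $G$ is $LL(1)$, from which unambiguity is immediate. The director sets are trivial to read off: $\mathrm{FIRST}(S)=\mathrm{FIRST}(S')=\mathrm{FIRST}(T)=\{\texttt{[},\texttt{\{}\}$, $S$ is the only nullable nonterminal, $\mathrm{FOLLOW}(S')=\mathrm{FOLLOW}(T)=\{\texttt{]}\}$, and $\mathrm{FOLLOW}(S)=\{\texttt{]}\}$ together with the end-of-input marker. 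Hence at $S$ the lookahead $\texttt{[}$ forces $S\rightarrow\texttt{[}S'\texttt{]}S$, $\texttt{\{}$ forces $S\rightarrow\texttt{\{\}}S$, and $\texttt{]}$ or end-of-input forces $S\rightarrow\epsilon$, while at $S'$ and at $T$ the two alternatives are separated by lookahead $\texttt{[}$ versus $\texttt{\{}$. All three choices are thus mutually exclusive and exhaustive, so each string of $L(G)$ has exactly one leftmost derivation and $G$ is unambiguous.

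For a self-contained argument I would also re-derive $L(G)=L_{\textbf{NC-GRAPH}}$ by matching the recursion of $G$ to the algorithm \texttt{enc} in Fig.~\ref{encdec}. For soundness, a routine induction on derivations of $G$ shows that every $w\in L(G)$ has balanced $\texttt{[}/\texttt{]}$, has every $\texttt{\{}$ immediately followed by $\texttt{\}}$, and --- because $S'$ is \emph{not} nullable --- contains no empty pair $\texttt{[]}$; hence \texttt{dec} turns $w$ into a loop-free noncrossing ordered graph, and since \texttt{dec} inverts \texttt{enc} on well-formed strings we get $w=\texttt{enc}(\texttt{dec}(w))\in L_{\textbf{NC-GRAPH}}$. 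For completeness I would induct on the vertex count $n$, peeling off the first top-level item of $\texttt{enc}$'s output --- the $\texttt{\{\}}$ printed after vertex~$1$ if vertex~$1$ has no neighbour among $2,\dots,n$, otherwise the bracket opened at vertex~$1$ for its farthest neighbour --- with $S'$ and $T$ recording, inside a bracket block, whether we are still at a point where opening brackets anchored at that block's left endpoint may still be emitted.

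I expect the one genuinely fiddly point to be this completeness bookkeeping: formulating the loop invariant that separates $S$, $S'$ and $T$ and checking that it is exactly maintained by the nested \texttt{for}-loops over $j$ in \texttt{enc}, and pinning down why an empty pair $\texttt{[]}$ is emitted by \texttt{enc} exactly when a self-loop is traversed, which is what makes the non-nullability of $S'$ line up with the loop-free restriction already built into $\textbf{NC-GRAPH}$. By contrast, the $LL(1)$ verification that does the actual work of the proposition is mechanical once the director sets above are in hand.
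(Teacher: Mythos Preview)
Your proposal is correct. In fact, the paper does not give a proof of this proposition at all: it merely displays the grammar $S \rightarrow \texttt{[}S'\texttt{]}\,S \mid \texttt{\{\}}\,S \mid \epsilon$, $S' \rightarrow \texttt{[}S'\texttt{]}\,T \mid \texttt{\{\}}\,S$, $T \rightarrow \texttt{[}S'\texttt{]}\,S \mid \texttt{\{\}}\,S$ and asserts that the language is unambiguous context-free, then moves on to discuss the significance of the bijection with derivation trees. So your $LL(1)$ verification supplies the argument the paper omits. Your director-set computation is accurate (only $S$ is nullable; $\mathrm{FOLLOW}(S)=\{\texttt{]},\$\}$, $\mathrm{FOLLOW}(S')=\mathrm{FOLLOW}(T)=\{\texttt{]}\}$), and the three-way split at $S$ and two-way splits at $S'$, $T$ are indeed pairwise disjoint, so the grammar is $LL(1)$ and hence unambiguous.

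Your observation that the non-nullability of $S'$ is exactly what rules out the substring $\texttt{[]}$, and that this matches the loop-free convention baked into $\textbf{NC-GRAPH}$, is also right and worth keeping. The only part you leave at the sketch level is the completeness direction $L_{\textbf{NC-GRAPH}}\subseteq L(G)$; your plan to induct on $n$ by peeling off the first top-level item of \texttt{enc}'s output, with $S'$ versus $T$ tracking whether further opening brackets anchored at the current block's left endpoint may still appear, is the natural one and goes through --- the invariant is simply that $S'$ is entered immediately after an opening bracket (so the next bracket, if it is $\texttt{[}$, shares the same left endpoint and must itself be followed by $T$), whereas $T$ and $S$ are entered after a closing bracket or a $\texttt{\}}$ (so any subsequent $\texttt{[}$ starts a strictly smaller arc and we are back in the $S$ regime after it closes). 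Since the paper asserts $L(G)=L_{\textbf{NC-GRAPH}}$ without argument either, you are not missing anything the paper provides.
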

The practical significance of Proposition \ref{uacfl} is that there is a bijection between $L_{\textbf{NC-GRAPH}}$ and the derivation trees of a context-free grammar.
 
\subsection{Bracketing Beyond the Encoding}

\paragraph{Non-Derivational Representation}

A non-derivational representation for any context-free language $L$ has been given by Chomsky and Sch\"utzenberger \shortcite{CS1963}.  This replaces the stack with a Dyck language $D$ and the grammar rules with co-occurrence patterns specified by a regular language $Reg$.  To hide the internal alphabet from the strings of the represented language, there is a homomorphism that cleans the internal strings of $\emph{Reg}$ and $D$ from internal  markup to get actual strings of the target language:
\[
L_{\textbf{NC-GRAPH}} = h( D \cap \emph{Reg} ).
\]
To make this concrete, replace the previous context free grammar by \changed{$S'' \rightarrow \texttt{[$'$} S' \texttt{]$'$}\; S \mid \texttt{\{\}}\; S \mid \epsilon$, \ $S \rightarrow \texttt{[} S' \texttt{]} \; S \mid \texttt{\{\}} \; S \mid \epsilon$, \ $S' \rightarrow \texttt{[$'$} S' \texttt{]$'$} \; T \mid \texttt{\{\}} \; S,\ T \rightarrow \texttt{[} S' \texttt{]}\; S \mid \texttt{\{\}}\; S$.}  The homomorphism $h$ (Fig. \ref{Rcomp}a) would now relate this language to the original language, mapping the string
\texttt{\small[$'$[$'$\{\}]$'$[[$'$\{\}]$'$\{\}]]$'$} to the string 
\texttt{\small[[\{\}][[\{\}]\{\}]]}, for example.  The Dyck language $D=D_3$ checks that the internal brackets are balanced, and the regular component $Reg$ 
(Fig. \ref{Rcomp}b) checks that the new brackets are used correctly.
A similar representation for the language $L_\textbf{NC-DIGRAPH}$ of encoded digraphs can be obtained with straightforward extensions.
\begin{figure}[th]\centering
\begin{tabular}{@{}c@{}c@{}}
\raisebox{1ex}{\resizebox{0.26\columnwidth}{!}{\includegraphics{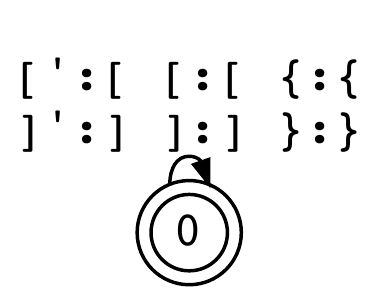}}}\ \  & \ \ \resizebox{0.5\columnwidth}{!}{\includegraphics{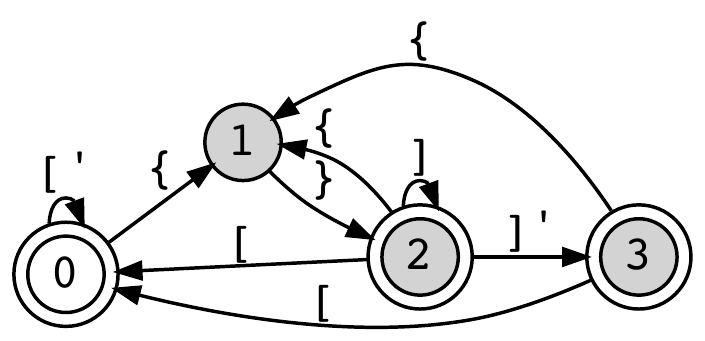}}\\[-1ex]
(a) & (b)\\[-2ex]
\end{tabular}
\caption{\label{Rcomp}\changed{The $h$ and $\emph{Reg}$ components}}
\end{figure}

\changed{The representation $L=h(D\cap Reg)$ is \emph{unambiguous}
if, for every word $w\in L$, the preimage $h^{-1}(w) \cap D\cap Reg$ is a single string.  This implies that $L$ is an \emph{unambiguous} context-free language.}
\begin{proposition}
The set of encoded digraphs, $L_{\textbf{NC-DIGRAPH}}$, has an  unambiguous representation.
\end{proposition}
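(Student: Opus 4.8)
The plan is to mimic, for digraphs, the construction already carried out for undirected graphs in the previous paragraphs, and to verify unambiguity in the sense just defined (the preimage $h^{-1}(w)\cap D\cap Reg$ is a singleton for every $w\in L_{\textbf{NC-DIGRAPH}}$). First I would fix the relevant alphabets: the ``clean'' alphabet $\{\texttt{/},\texttt{>},\texttt{<},\texttt{\backslash},\texttt{[},\texttt{]},\texttt{\{},\texttt{\}}\}$ in which encoded digraphs are written, and an internal alphabet obtained by adorning the opening/closing arc symbols with a prime, i.e. adding $\texttt{/$'$},\texttt{>$'$},\texttt{<$'$},\texttt{\backslash$'$},\texttt{[$'$},\texttt{]$'$}$. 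The homomorphism $h$ simply erases the primes (and is the identity on the space symbols $\texttt{\{},\texttt{\}}$), exactly as in Fig.~\ref{Rcomp}(a) but with the three arc ``colours'' carried along. The Dyck component $D$ is the Dyck language over the three bracket pairs $(\texttt{[},\texttt{]})$, $(\texttt{[$'$},\texttt{]$'$})$, $(\texttt{\{},\texttt{\}})$ \emph{together with} the convention that each of $\texttt{/},\texttt{<},\texttt{/$'$},\texttt{<$'$}$ counts as an opener and each of $\texttt{>},\texttt{\backslash},\texttt{>$'$},\texttt{\backslash$'$}$ as the matching closer of the appropriate colour; in other words $D$ only checks nesting/balance and is blind to the direction decoration. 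The regular component $Reg$ is the direct analogue of Fig.~\ref{Rcomp}(b): it encodes the finite control of the unambiguous context-free grammar for $L_{\textbf{NC-GRAPH}}$ (the grammar with nonterminals $S'',S,S',T$), lifted so that each of its bracket productions comes in the three arc-flavoured variants $\texttt{/\,>}$, $\texttt{<\,\backslash}$, $\texttt{[\,]}$, and so that the primed flavour is forced exactly in the positions where $S''$ or $S'$ rewrites a bracket.

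The key steps, in order, are: (1) write down the ``digraph grammar'' $G_{\textbf{NC-DIGRAPH}}$ explicitly by taking the four-nonterminal grammar for $L_{\textbf{NC-GRAPH}}$ and replacing, in every rule, the pair $\texttt{[}\,\cdots\texttt{]}$ by a choice among $\texttt{/}\,\cdots\texttt{>}$, $\texttt{<}\,\cdots\texttt{\backslash}$, $\texttt{[}\,\cdots\texttt{]}$ (and correspondingly for the primed pair), keeping the $\texttt{\{\}}$ and $\epsilon$ rules untouched; (2) observe that this grammar still generates exactly $L_{\textbf{NC-DIGRAPH}}$ — this is immediate from the way the digraph encoding was defined, since each matching bracket pair of the graph encoding is simply relabelled by one of three mutually exclusive arc types according to $(i,j)\in A$, $(j,i)\in A$, or both, and the nesting structure is identical to the undirected case; (3) check that $G_{\textbf{NC-DIGRAPH}}$ is unambiguous, which follows because the underlying ``skeleton'' grammar is unambiguous (Proposition~\ref{uacfl}) and the arc-type choice at each bracket is completely determined by the terminal symbols produced, so a parse tree is uniquely recoverable from the string; (4) apply the Chomsky--Sch\"utzenberger construction to this unambiguous grammar to obtain $L_{\textbf{NC-DIGRAPH}} = h(D\cap Reg)$ with $h$, $D$, $Reg$ as above; and (5) verify the unambiguity condition of the representation: given $w\in L_{\textbf{NC-DIGRAPH}}$, any $v\in h^{-1}(w)\cap D\cap Reg$ must agree with $w$ on all symbols except possibly the primes, the primes are placed deterministically by the $Reg$ component (it has a unique accepting run because it is built from the unambiguous skeleton grammar, with the prime forced precisely at the $S''/S'$ brackets), hence $v=w'$ is unique; by the last displayed remark in the excerpt this also re-proves that $L_{\textbf{NC-DIGRAPH}}$ is an unambiguous context-free language.

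I expect the main obstacle to be bookkeeping rather than conceptual: making precise exactly how the asymmetric arc symbols $\texttt{/},\texttt{>}$ (and $\texttt{<},\texttt{\backslash}$) are to be treated by the Dyck component $D$, since a standard Dyck language is phrased in terms of identically-labelled open/close pairs, and here the opener and closer of an arc are different glyphs. The clean way to handle this is to introduce, inside $D$'s alphabet, a fixed bijection between openers and closers (three colours, each colour having a ``left-to-right'' and a ``right-to-left'' and a ``bidirectional'' token, but all three openers of a colour balanced against the corresponding closers), or, more simply, to first map everything through an auxiliary homomorphism onto three colours and invoke $D_3$; either way one must confirm that this does not conflate strings that $h$ should keep distinct. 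A secondary point to get right is that the $Reg$ component must simultaneously enforce (a) the skeleton bracketing discipline of $L_{\textbf{NC-GRAPH}}$ and (b) the constraint that within each maximal arc the three direction types may be chosen freely but consistently between the matching opener and closer; both are regular (indeed local) conditions, so a finite automaton suffices, but the product construction should be stated carefully enough that the uniqueness of its accepting run — and hence the claimed unambiguity — is transparent.
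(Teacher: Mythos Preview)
Your proposal is essentially what the paper does, although the paper does not actually supply a proof of this proposition: it merely remarks that ``a similar representation for the language $L_{\textbf{NC-DIGRAPH}}$ of encoded digraphs can be obtained with straightforward extensions'' of the undirected construction, and states the proposition. Your plan is precisely to spell out those straightforward extensions, so the approach matches.

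One point in your write-up deserves tightening. In your first description of $D$ you say it is ``blind to the direction decoration,'' and later you list as a regular condition that $Reg$ enforce direction types ``consistently between the matching opener and closer.'' Taken literally these two choices are incompatible: if $D$ does not distinguish $\texttt{/}$ from $\texttt{<}$ from $\texttt{[}$ as distinct opener types, then the property ``the opener at position $i$ and its matching closer at position $j$ carry the same direction label'' is not regular in the surface string (it requires bracket matching). You yourself flag this under ``main obstacle'' and give the correct fix --- let $D$ itself have one bracket pair per direction-and-prime combination (seven pairs: $\{\texttt{\{},\texttt{\}}\}$ plus unprimed and primed versions of $\texttt{/}\!\cdots\!\texttt{>}$, $\texttt{<}\!\cdots\!\texttt{\backslash}$, $\texttt{[}\!\cdots\!\texttt{]}$), so that matching-consistency is absorbed into $D$ and $Reg$ is left only with the genuinely local skeleton constraints of Fig.~\ref{Rcomp}(b). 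With that choice, your step~(5) goes through cleanly: $h$ only erases primes, the unprimed symbols of any $v\in h^{-1}(w)\cap D\cap Reg$ are forced by $w$, and the prime placement is determined by the unique derivation in the unambiguous skeleton grammar (Proposition~\ref{uacfl}), lifted colour-by-colour. So commit to the richer $D$ from the outset and drop the ``blind'' formulation; then the argument is clean and matches the paper's intended (but unwritten) proof.
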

\begin{proposition}
\changed{Let $L_i=h(D \cap R_i)$, $i\in \{0,1,2\}$ be unambiguous representations with $R_1,R_2 \subseteq R_0$.  Then $L_3=h(D\cap (R_1 \cap R_2))$ is an unambiguous context-free language and the same as $L_1 \cap L_2$. }
\end{proposition}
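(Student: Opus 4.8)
The plan is to prove the two assertions of the proposition in turn: first that $L_3 = L_1 \cap L_2$ as sets of strings, and then that the representation $h(D \cap (R_1 \cap R_2))$ is unambiguous (which, by the definition recalled just before the statement, immediately yields that $L_3$ is an unambiguous context-free language).

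For the set equality, I would argue both inclusions by chasing preimages under $h$. Fix a word $w$. If $w \in L_3$, then there is some $u \in D \cap R_1 \cap R_2$ with $h(u) = w$; since $R_1 \cap R_2 \subseteq R_1$ and $\subseteq R_2$, this same $u$ witnesses $w \in h(D \cap R_1) = L_1$ and $w \in h(D \cap R_2) = L_2$, so $L_3 \subseteq L_1 \cap L_2$. Conversely, suppose $w \in L_1 \cap L_2$. Then there exist $u_1 \in D \cap R_1$ and $u_2 \in D \cap R_2$ with $h(u_1) = h(u_2) = w$. Because $R_1, R_2 \subseteq R_0$, both $u_1$ and $u_2$ lie in $D \cap R_0$, and both map to $w$ under $h$; since $L_0 = h(D \cap R_0)$ is an \emph{unambiguous} representation, the preimage $h^{-1}(w) \cap D \cap R_0$ is a single string, forcing $u_1 = u_2 =: u$. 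This common string $u$ therefore lies in $D \cap R_1$ and in $D \cap R_2$, hence in $D \cap (R_1 \cap R_2)$, so $w = h(u) \in L_3$. This gives $L_1 \cap L_2 \subseteq L_3$.

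For unambiguity of $L_3 = h(D \cap (R_1 \cap R_2))$, I would take any $w \in L_3$ and show $h^{-1}(w) \cap D \cap R_1 \cap R_2$ is a singleton. It is nonempty since $w \in L_3$. For uniqueness, any two elements of this set also lie in $h^{-1}(w) \cap D \cap R_0$ (again using $R_1 \cap R_2 \subseteq R_1 \subseteq R_0$), which is a singleton by unambiguity of the representation $L_0$; hence the two elements coincide. This establishes that $h(D \cap (R_1 \cap R_2))$ is an unambiguous representation in the sense defined above, and so $L_3$ is an unambiguous context-free language — provided we know $R_1 \cap R_2$ is still regular, which is immediate since regular languages are closed under intersection, and $D \cap (R_1 \cap R_2)$ is context-free by the usual closure of context-free languages under intersection with regular sets.

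The only mildly delicate point — and the one I would be most careful about — is making sure the common reference set $R_0$ is doing its job: the argument crucially uses that $L_0$ is an \emph{unambiguous} representation (not merely that $L_1$ and $L_2$ individually are), so that two preimages of the same $w$, coming a priori from the two different representations $R_1$ and $R_2$, are forced to be literally the same string rather than merely two strings with the same $h$-image. Everything else is routine preimage-chasing and standard closure properties, so I do not anticipate any real obstacle beyond stating the role of $R_0$ cleanly.
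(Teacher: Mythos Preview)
Your argument is correct and follows essentially the same route as the paper's proof: the inclusion $L_3 \subseteq L_1 \cap L_2$ and the unambiguity of $L_3$ are treated as immediate, while the reverse inclusion uses the unambiguity of the ambient representation $L_0 = h(D \cap R_0)$ to force the two preimages from $R_1$ and $R_2$ to coincide. Your write-up simply spells out in more detail what the paper compresses into a couple of sentences.
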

\begin{proof}
It is immediate that $L_3 \subseteq L_1 \cap L_2$ and $L_3$ is an unambiguous context-free language.  To show that $L_1 \cap L_2 \subseteq L_3$, take an arbitrary $s \in L_1 \cap L_2$.  Since $R_1,R_2 \subseteq R_0$ there is a unique $s'\in h^{-1}(s)$ such that $s' \in D \cap (R_1 \cap R_2)$.  Thus $s \in L_3$.
\end{proof}

\section{Latent Bracketing}

In this section, we extend the internal strings of the non-derivational representation of $L_\textbf{NC-DIGRAPH}$ in such a way that the configurations given in Proposition \ref{prop} can be detected locally from these.

\paragraph{Classification of Underlying Chains}

\newcommand\UNAMBS{\textbf{UNAMB}$_\textrm{S}$}
\newcommand\ACYCD{\textbf{ACYC}$_\textrm{D}$}
\newcommand\ACYCU{\textbf{ACYC}$_\textrm{U}$}
\newcommand\CONNW{\textbf{CONN}$_\textrm{W}$}

A \emph{maximal linear chain} is a maximally long sequence of one or more \changed{edges} that correspond to an underlying left-to-right path in the underlying graph in such a way that \changed{no edge in this chain is properly covered by an edge that does not properly cover all the edges in the chain.}
For example,
\changed{the graph}\\[-3ex]
\[\scalebox{.82}{\!\!\!{%
    \begin{dependency}[theme = simple, edge style=-]%
      \begin{deptext}[column sep=0em]%
\!\texttt{[$'$[$'$\{}\! \& 
\!\texttt{\}]$'$[[$'$\{}\! \& 
\!\texttt{\}]$'$[\{}\!   \& 
\!\texttt{\}]][[$'$\{}\! \& 
\!\texttt{\}]$'$\{}\! \& 
\!\texttt{\}[\{}\! \& 
\!\texttt{\}]]]$'$[\{}\! \& 
\!\texttt{\}[\{} \& 
\texttt{ \}]\{}\! \& 
\!\texttt{\}]}\! \\
  \end{deptext}
      \depedge[arc angle=50,label style=below]{1}{7}{\small I}
      \depedge[label style=below,arc angle=75]{1}{2}{\small \ \ \ II}
      \depedge[label style=below]{2}{3}{\small III}
      \depedge[label style=below]{2}{4}{\small II}
      \depedge[label style=below]{3}{4}{\small III}
      \depedge[label style=below,arc angle=60]{4}{7}{\small II}
      \depedge[label style=below]{7}{10}{\normalsize I}
      \depedge[arc angle=75,label style=below]{4}{5}{\small IV}
      \depedge[arc angle=75,label style=below]{6}{7}{\small V\ \ }
      \depedge[arc angle=120,label style=below]{8}{9}{\small VI\ \ }
\end{dependency}}}\vspace{-2ex}
\]
contains six maximal linear chains, indicated with their Roman numbers on each arc.  We decide non-local properties of noncrossing digraphs by recognizing maximal linear chains as parts of configurations presented in Proposition \ref{prop}.

\changed{Every \emph{loose} chain (like V and VI) starts with a bracket that is adjacent to a \texttt{\}}-bracket.  Such a chain can contribute only a covering edge to an underlying cycle. In contrast, a bracket with an apostrophe marks the beginning of a \emph{non-loose} chain that can either start at the first vertex, or share starting point with a covering chain. When a non-loose chain is covered, it can be touched twice by a covering edge. The prefixes of chains are classified incrementally, from left to right, with a finite automaton (Figure \ref{succ}).  All states of the automaton are final and correspond to distinct classes of the chains.  These classes are encoded to an extended set of brackets.}

\begin{figure}[ht]
\resizebox{1.03\columnwidth}{!}{\includegraphics{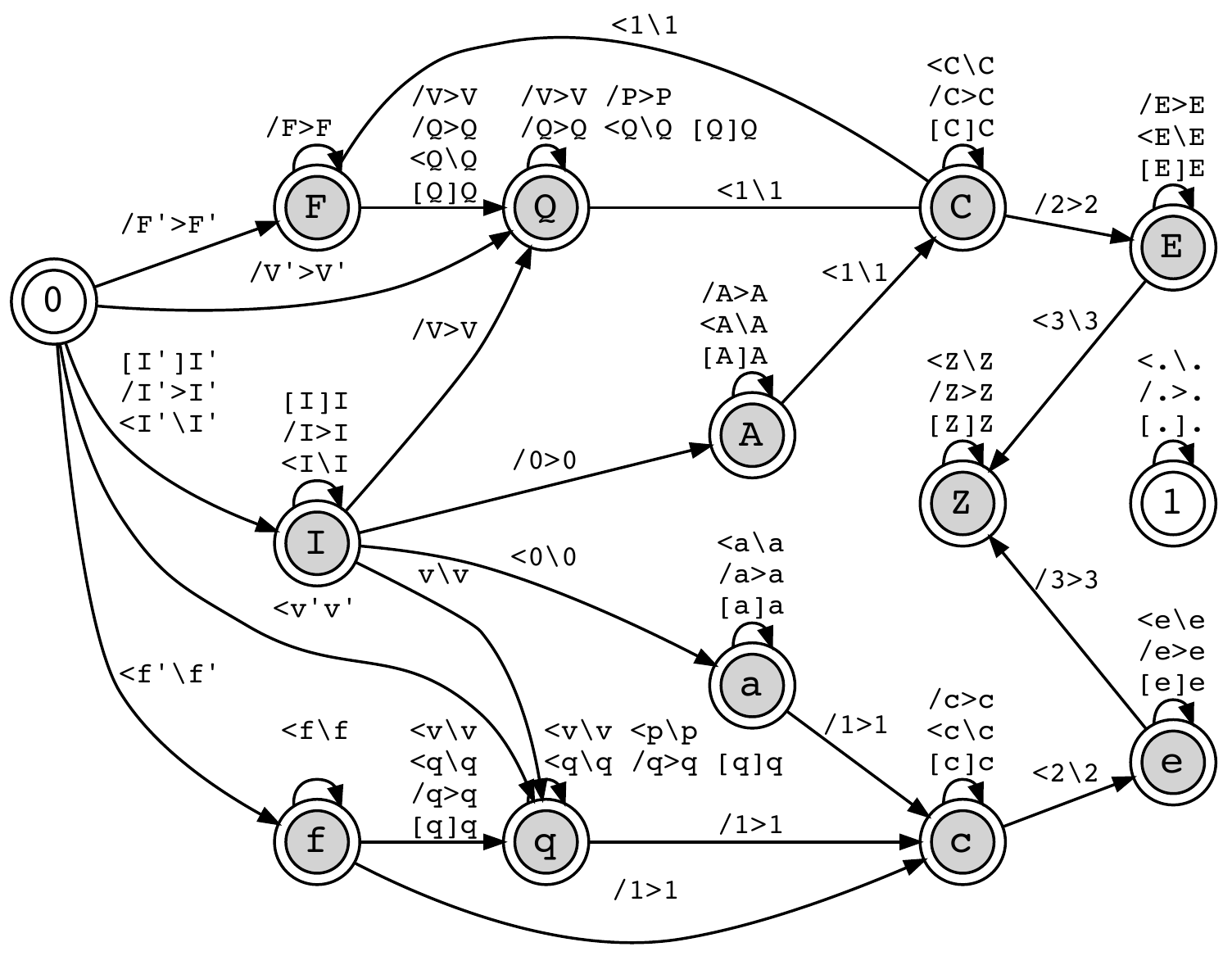}}
\caption{\label{succ}\changed{The finite automaton whose state 0 begins non-loose chains and state 1 loose chains}}
\end{figure}

\changed{The automaton is symmetric: states with uppercase names are symmetrically related with corresponding lowercase states. Thus, it suffices to define the initial and uppercase-named states:}\\

\changed{\small\begin{tabular}{@{\ }ll}
\texttt{0}& the initial state for a non-loose chain; \\
\texttt{I} & a bidirectional chain: \hfill $u\leftrightarrow (v\leftrightarrow) y$;\\
\texttt{A} & a primarily bidirectional forward chain: \hfill $u\leftrightarrow v \rightarrow y$;\\
\texttt{F} & a forward chain: \hfill $u\rightarrow v\rightarrow y$; \\
\texttt{Q} & a primarily forward chain: \hfill $u\rightarrow v \leftrightarrow (\cdots \rightarrow) y$;\\
\texttt{C} & a primarily forward 1-turn chain: \hfill $u\rightarrow v\leftarrow y$;\\
\texttt{E} & a primarily forward 2-turn chain: \hfill $u\rightarrow v \leftarrow x\rightarrow y$;\\
\texttt{Z} & a 3-turn chain;\\
\texttt{1}& the initial (and only) state for a loose chain; \\[1ex]
\end{tabular}}
\changed{Recognition of ambiguous paths in configurations $u\overrightarrow{\overleftarrow{\rightarrow \rightarrow}v\leftarrow}y$ and $u\overrightarrow{\overleftarrow{\leftarrow v\rightarrow x\leftarrow}\leftarrow}y$} involves three chain levels.  To support the recognition, subtypes of edges are defined according to the chains they cover. The brackets {\small \texttt{>I'}, \texttt{\textbackslash I'}, \texttt{>I}, \texttt{\textbackslash I}, \texttt{\textbackslash A}, \texttt{>a}, \texttt{\textbackslash Q}, \texttt{>Q}, \texttt{>q},\texttt{\textbackslash{}q}, \texttt{>C}, \texttt{\textbackslash c}, \texttt{\textbackslash E}, \texttt{>e}} indicate edges that constitute a cycle with the chain they cover.  The brackets {\small \texttt{>V'}, \texttt{\textbackslash v'}, \texttt{>V}, \texttt{\textbackslash v}} indicate edges that cover 2-turn chains. Not all states make these distinctions.

\newcommand\nonloose{\ensuremath{\overline{\textrm{loose}}}}

\newcommand\acycuex{\scalebox{.8}{{
    \begin{dependency}[theme = simple, edge style=-]
      \begin{deptext}[column sep=1em]
       \& \phantom{XXX} \& \phantom{XXX}$R_{\nonloose} R$  \\
  \end{deptext}
      \depedge[arc angle=50]{1}{3}{}
      \depedge[arc angle=0]{2}{3}{\normalsize a \changed{nonloose} chain\phantom{XXXXX}}
      \depedge[edge style=-,edge start x offset=8pt,arc angle=0,edge style=dashed]{1}{2}{}
  \end{dependency}}}}

\newcommand\connwex{\scalebox{.9}{
    \begin{dependency}[theme = simple, edge style=-]
      \begin{deptext}[column sep=1em]
       \& $R_\textrm{loose}\textrm{(no connecting edges)}\!\!\!\!\!\!\!\!\!\!\!\!\!\!\!\!\!\!\!\!\!\!\!\!\!\!\!\!\!\!\!\!\!\!\!\!\!\!\!\!\!\!\!\!\!\!$ \& \& \& \& (a vertex without edges) \\
  \end{deptext}
      \depedge{1}{2}{}
  \end{dependency}}}

\newcommand\acycdex{\scalebox{.8}{{\!\!\!\!
    \begin{dependency}[theme = simple, edge style=->]
      \begin{deptext}[column sep=1em]
       \phantom{XX} \& \phantom{XXXXxXX} \& $\!\!\!\!\!\!\!\!\!\!\!\!\!\!\!\!\!\!\!\!\!\!\!\!R_\textrm{right} R_\texttt{\backslash}$\!\!\!\!\!\!\!\!\!\!\!\!\!\!\!\!\!\!\!\!\!\!\!\!  
       \&
       \phantom{XX} \& \phantom{XxXXXXX} \& $\!\!\!\!\!\!\!\!\!\!\!\!\!\!\!\!\!\!\!\!\!\!\!\!R_\textrm{left} R_\texttt{>}$\!\!\!\!\!\!\!\!\!\!\!\!  
       \& \& \& {\ \ \ \ \ } \\
  \end{deptext}
      \depedge[arc angle=50,edge style=<-,edge start x offset=8pt]{1}{3}{}
      \depedge[edge start x offset=-4pt,arc angle=0]{2}{3}{\normalsize \raisebox{1pt}{forward\phantom{yXXXXXX}}}
      \depedge[arc angle=50,edge start x offset=8pt]{4}{6}{}
      \depedge[edge style=<-,edge start x offset=-4pt,arc angle=0]{5}{6}{\normalsize \raisebox{1pt}{backward\phantom{yXXXXXX}}}
      \depedge[edge style=<->]{8}{9}{inverted arc}
      \depedge[edge style=-,edge start x offset=8pt,arc angle=0,edge style=dashed]{1}{2}{}
      \depedge[edge style=-,edge start x offset=8pt,arc angle=0,edge style=dashed]{4}{5}{}
  \end{dependency}}
}}

\newcommand\unambsex{\scalebox{.8}{{
\!\!\!\!\!\!\!\!\!%
\specialcell{
    \begin{dependency}[theme = simple, edge style=->]
      \begin{deptext}[column sep=1ex]
       \phantom{XX} \& \phantom{xXXXXXX} \& $\!\!\!\!\!\!\!\!\!\!\!\!R_\textrm{right} R_\texttt{>}$\!\!\!\!\!\!\!\!\!\!\!\!  
       \&
       \phantom{XX} \& \phantom{xXXXXXX} \& $\!\!\!\!\!\!\!\!\!\!\!\!R_\textrm{left} R_\texttt{\backslash}$\!\!\!\!\!\!\!\!\!\!\!\!  
       \&
       \phantom{XX} \& \phantom{XXXXXXX} \& $\!\!\!\!\!\!\!\!\!\!\!\!\!\!\!\!\!\!\!\!\!\!\!\!R_\textrm{vergent} R$\!\!\!\!\!\!\!\!\!\!\!\!  \\
  \end{deptext}
      \depedge[arc angle=50,edge start x offset=8pt]{1}{3}{}
      \depedge[edge start x offset=-4pt,arc angle=0]{2}{3}{\normalsize \raisebox{1pt}{forward}\phantom{XXXXX}}
      \depedge[edge style=<-,arc angle=50,edge start x offset=8pt]{4}{6}{}
      \depedge[edge style=<-,edge start x offset=-4pt,arc angle=0]{5}{6}{\normalsize \raisebox{1pt}{backward}\phantom{XXXXX}}
      \depedge[arc angle=50,edge style=-,edge start x offset=8pt]{7}{9}{}
      \depedge[edge style=-,edge start x offset=-4pt,arc angle=00]{8}{9}{\normalsize \raisebox{1pt}{con/divergent}\phantom{XXXXXXX}}
      \depedge[edge style=->,edge start x offset=8pt,arc angle=0,edge style=dashed]{1}{2}{}
      \depedge[edge style=-,edge start x offset=8pt,arc angle=0,edge style=dashed]{7}{8}{}
      \depedge[edge style=<-,edge start x offset=8pt,arc angle=0,edge style=dashed]{4}{5}{}
  \end{dependency}\\[-3ex]
%
\begin{dependency}[theme = simple, edge style=->]
      \begin{deptext}[column sep=1ex]
       \phantom{xxxx} \& \phantom{xx} \& \phantom{xxxxxxx} \& \phantom{xxx}$R_\textrm{left2}R_\texttt{>}$\phantom{}\!\!\!\!\!\!\!\!\!\!\!\! \& 
       \!\!\!\phantom{xxxxxxxxx} \& \phantom{xxx} \& \phantom{x} \& \phantom{xxx}$R_\textrm{right2}R_\texttt{\textbackslash}$\!\!\!\!\!\!\!\!\!\! \\
  \end{deptext}
      \depedge[arc angle=40,edge style=<-,edge start x offset=8pt]{4}{1}{}
      \depedge[edge style=<->, edge start x offset=-4pt,arc angle=0,edge style=dashed]{1}{3}{\normalsize \raisebox{1pt}{\phantom{x}divergent }}
      \depedge[edge style=<-,edge start x offset=4pt,arc angle=0,edge style=dashed]{3}{4}{\normalsize \raisebox{1pt}{backward}\phantom{xxxx}}
      
      \depedge[arc angle=40,edge start x offset=8pt]{8}{5}{}
      \depedge[edge style=->,edge start x offset=-4pt,arc angle=0,edge style=dashed]{5}{6}{\normalsize \raisebox{1pt}{\phantom{xxxx}forward}}
      \depedge[edge style=<->, edge start x offset=4pt,arc angle=0,edge style=dashed]{6}{8}{\normalsize \raisebox{1pt}{divergent\phantom{xx}}}
  \end{dependency}
  }
  }
}}

\newcommand\projex{\scalebox{.9}{{
    \begin{dependency}[theme = simple, edge style=->]
      \begin{deptext}[column sep=1em]
      $L_\texttt{/} L_\texttt{<}$ \& \& \& \& \& \& $R_\texttt{>} R_\texttt{\backslash}$  \\
  \end{deptext}
      \depedge[edge style=->,arc angle=50]{1}{3}{}
      \depedge[edge style=<-,arc angle=50]{1}{2}{}
      \depedge[arc angle=50,edge style=<-]{5}{7}{}
      \depedge[arc angle=50]{6}{7}{}
  \end{dependency}}
}}

\begin{table*}[ht]\centering
\scalebox{.89}{\begin{tabular}{|@{}c@{}cl@{}|}
\hline &&\\[-2ex]
Forbidden patterns in noncrossing digraphs\c & 
Property & 
Constraint language \\
\hline
  \raisebox{-3ex}{\acycuex} &
\ACYCU  &
$A_U= \Sigma^* - \Sigma^* R_{\nonloose}R \Sigma^*$ 
\\[-3.5ex]
\ \ \raisebox{-2ex}{\connwex} &
\CONNW  & 
\changed{$C_W=\Sigma^* 
- \Sigma^* R_\textrm{loose} (\epsilon \cup B \Sigma^*)
- (B \Sigma^* \cup \Sigma^* B)
$}
\\[-1ex]
\raisebox{-3ex}{\acycdex} &
\ACYCD  &
$A_D=\Sigma^* - \Sigma^* ( R_\textrm{right}R_\texttt{\backslash} \cup R_\textrm{left}R_\texttt{>} \cup \Sigma_\textrm{inv} ) \Sigma^*$
\\[-1ex] 
\raisebox{-3ex}{\unambsex}\phantom{xxx}\!\!\!\!\!\!\!\!\!\!\!  &
\UNAMBS & 
$U_S = \begin{matrix}\\[-1ex]
\Sigma^* - \Sigma^* ( R_\textrm{right}R_\texttt{>} \cup R_\textrm{left}R_\texttt{\backslash} \cup R_\textrm{vergent}R ) \Sigma^* 
\\ - \Sigma^*( R_\textrm{left2}R_> \cup R_\textrm{right2}R_\texttt{\textbackslash} ) \Sigma^* 
\end{matrix}$
\\[-2ex] 
\raisebox{-3ex}{\projex} & 
\PROJW & 
$P_W=\Sigma^* - \Sigma^* 
( L_\texttt{/}L_\texttt{<} \cup 
R_\texttt{>}R_\texttt{\backslash} ) \Sigma^*$ 
\\ 
(an arc without inverse) &
\textbf{INV} &  
$I = \Sigma^* - \Sigma^* \Sigma_\textrm{or} \Sigma^*$
\\ 
(a state with more than 2 incoming arcs) &
\textbf{OUT} & $\textit{Out}=\Sigma^* - \Sigma^* \Sigma_\textrm{in} ( \Sigma - B)^* \Sigma_\textrm{in}  \Sigma^*$
\\ 
(an inverted edge) &
\!\!\!\changed{\textbf{ORIENTED}}\!\!\! & 
$O=\Sigma^* - \Sigma^* \Sigma_\textrm{inv}   \Sigma^*$
\\\hline
\end{tabular}}
\caption{\label{family}Properties of encoded noncrossing digraphs as constraint languages}
\end{table*}

\paragraph{Extended Representation}

\changed{The extended brackets encode the latent structure of digraphs: the orientation and the subtype of the edge and the class of the chain.   The total alphabet $\Sigma$ of the strings now contains the boundary brackets \texttt{\small\{\}} and 54 pairs of brackets (Figure \ref{succ}) for edges from which we obtain a new Dyck language, $D_{55}$, and an extended homomorphism $h_\textrm{lat}$.}  

\changed{The \textit{Reg} component of the language representation is replaced with \textit{Reg$_\textrm{lat}$}, that is, an intersection of (1) an inverse homomorphic image of \textit{Reg} to strings over the extended alphabet, (2) a local language that constrains adjacent edges according to Figure \ref{succ}, (3) a local language specifying how the chains start, and (4) a local language that distinguishes pure oriented edges from those that cover a cycle or a 2-turn chain.  The new component requires only 24 states as a deterministic automaton.}

\begin{proposition}
$h_\textrm{lat}(D_{55} \cap \textit{Reg}_\textrm{lat})$ is an unambiguous representation for $L_{\textbf{NC-DIGRAPH}}$.
\end{proposition}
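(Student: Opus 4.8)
The plan is to reduce the claim to three things: (i) $h_\textrm{lat}(D_{55}\cap\textit{Reg}_\textrm{lat})=L_{\textbf{NC-DIGRAPH}}$ as a language identity, (ii) the representation is \emph{unambiguous} in the sense defined above, i.e.\ for every $w\in L_{\textbf{NC-DIGRAPH}}$ the preimage $h_\textrm{lat}^{-1}(w)\cap D_{55}\cap\textit{Reg}_\textrm{lat}$ is a single string, and (iii) $\textit{Reg}_\textrm{lat}$ is indeed regular so that the whole thing is a legitimate Chomsky--Sch\"utzenberger-style representation. Part (iii) is immediate: $\textit{Reg}_\textrm{lat}$ is defined as a finite intersection of an inverse-homomorphic image of the regular language $\textit{Reg}$ with three local (hence regular) languages, and regular languages are closed under inverse homomorphism and intersection; the stated bound of 24 deterministic states is a concrete witness but not logically needed for the proposition.

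For (i), I would argue by exhibiting a bijection between strings of $L_{\textbf{NC-DIGRAPH}}$ and strings of $D_{55}\cap\textit{Reg}_\textrm{lat}$ that is compatible with $h_\textrm{lat}$. In one direction: given an encoded digraph $s\in L_{\textbf{NC-DIGRAPH}}$ (equivalently, a noncrossing digraph via the \texttt{enc}/\texttt{dec} bijection), decompose its underlying graph into maximal linear chains as defined in Section~5, run the left-to-right classifying automaton of Figure~\ref{succ} over each chain, and simultaneously assign each edge bracket its orientation ($\texttt{/}\,\texttt{>}$, $\texttt{<}\,\backslash$, or $\texttt{[}\,\texttt{]}$) together with the subtype decoration recording which chain it covers and whether it closes a cycle or covers a $2$-turn chain. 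This produces a string $s'$ over $\Sigma$; one checks it lies in $D_{55}$ (brackets are balanced because the chain/subtype annotation never changes the bracketing structure, only the labels) and in $\textit{Reg}_\textrm{lat}$ (component~(1) holds because erasing the new annotation recovers a string in $\textit{Reg}$; components~(2),(3),(4) hold by construction of the automaton and the chain-start/orientation conventions). And $h_\textrm{lat}(s')=s$ since $h_\textrm{lat}$ just strips the annotation. Conversely, any $s'\in D_{55}\cap\textit{Reg}_\textrm{lat}$ maps under $h_\textrm{lat}$ into $L_{\textbf{NC-DIGRAPH}}$ because component~(1) forces $h(h_\textrm{lat}(s'))\in\textit{Reg}$ composed appropriately and the $D_{55}$ constraint projects to the $D_3$/$D_2$ constraint; so $h_\textrm{lat}(s')$ decodes to a genuine noncrossing digraph.

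For (ii), unambiguity, the key observation is that the annotation is a \emph{deterministic function} of the underlying encoded digraph: the decomposition into maximal linear chains is canonical (it is defined purely in terms of the cover relation, with no choices), the automaton of Figure~\ref{succ} is deterministic so each chain prefix gets a unique class, and the orientation plus cycle/$2$-turn subtype of each edge is dictated by the digraph. Hence the map $s\mapsto s'$ above is well-defined and is a section of $h_\textrm{lat}$ restricted to $D_{55}\cap\textit{Reg}_\textrm{lat}$; combined with the fact (to be verified) that $\textit{Reg}_\textrm{lat}$ \emph{rejects} every other annotation of the same $s$, we get $|h_\textrm{lat}^{-1}(s)\cap D_{55}\cap\textit{Reg}_\textrm{lat}|=1$. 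This in turn implies, by the remark preceding the earlier propositions, that $L_{\textbf{NC-DIGRAPH}}$ is an unambiguous context-free language, so the proposition follows.

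The main obstacle is the verification inside (i)/(ii) that $\textit{Reg}_\textrm{lat}$ is \emph{exactly right} --- that its four local components, taken together, accept the canonical annotation and no competitor. Concretely: component~(2) must enforce that the class label on each edge is the value the Figure~\ref{succ} automaton actually produces from the class of the preceding edge in the same chain (which requires the local language to "see" enough context to know which bracket is the chain-predecessor, using the $\{\}$ boundaries and the cover nesting); component~(3) must pin down that loose chains begin exactly at brackets adjacent to a \texttt{\}} and non-loose chains at apostrophed brackets, matching the $\texttt{0}$ vs.\ $\texttt{1}$ start states; and component~(4) must correctly separate plain oriented edges from the $\texttt{>I}$, $\backslash\texttt{Q}$, $\ldots$ cycle-witnessing and $\texttt{>V}$, $\backslash\texttt{v}$ $2$-turn-witnessing brackets so that Proposition~\ref{prop}'s configurations --- and only those --- are flagged. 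This is the heart of the bracket design of Section~5, and the proof is essentially a careful but finite case analysis tying the automaton transitions and bracket-subtype table to the configuration list in Proposition~\ref{prop}; once that correspondence is checked, both the language identity and the uniqueness of the preimage are routine.
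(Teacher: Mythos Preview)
The paper does not supply a formal proof of this proposition. It is stated immediately after the four-component description of $\textit{Reg}_\textrm{lat}$ and the paper then passes to an example and the constraint table; the only justification offered is the empirical cross-validation of Section~6, where the representation and its unambiguity are checked by enumerating encoded digraphs with finite-state scripts up to $n=9$ and comparing against a brute-force graph enumerator up to $n=6$.

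Your proposal is therefore strictly more than what the paper does: you correctly isolate the three ingredients (regularity of $\textit{Reg}_\textrm{lat}$, the language identity, and uniqueness of the annotated preimage via determinism of the chain-classifying automaton), and you correctly locate the substantive work in verifying that the four local components of $\textit{Reg}_\textrm{lat}$ accept exactly the canonical annotation. That case analysis is real, and it is precisely what the paper replaces by experimental enumeration rather than carrying out. So your approach is sound and would constitute a genuine proof; the paper, by contrast, treats the proposition as a design claim validated computationally.
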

The internal language $L_\textbf{NC-DIGRAPH$_\textrm{lat}$} = D_{55} \cap \textit{Reg}_\textrm{lat}$ is called the set of \emph{latent encoded digraphs}.

\let\amp\&
\paragraph{Example}
Here is a digraph with its latent encoding:\\
$\scalebox{1.25}{\tiny{\!\!\!\!
    \begin{dependency}[theme = simple, edge style=->]
      \begin{deptext}[column sep=0ex]
$\underbrace{\texttt{<f$'$\;[I$'$}}_{1}\texttt{\{}$\!\&%
\!$\texttt{\}}\underbrace{\texttt{]I$'$\;/0\;/F$'$}}_{2}\texttt{\{}$\!\&%
\!$\texttt{\}}\underbrace{\texttt{>F$'$}}_{3}\texttt{\{}$\!\&%
\!$\texttt{\}}\underbrace{\texttt{<$.$}}_{4}\texttt{\{}$\!\&%
\!$\texttt{\}}\underbrace{\texttt{/$.$}}_{5}\texttt{\{}$\!\&%
\!$\texttt{\}}\underbrace{\texttt{>$.$}}_{6}\texttt{\{}$\!\&%
\!$\texttt{\}}\underbrace{\texttt{\backslash$.$\;>0\;\backslash{}f$'$}}_{7}$\\
  \end{deptext}
      \depedge[edge style=<->,arc angle=30]{1}{7}{}
      \depedge[arc angle=20]{1}{2}{}
      \depedge[arc angle=20]{2}{3}{}
      \depedge[arc angle=30]{2}{7}{}
      \depedge[arc angle=30,edge style=<-]{4}{7}{}
      \depedge[arc angle=20]{5}{6}{}
  \end{dependency}}
}$\\[1ex]
The brackets in the extended representation contain information that helps us recognize, through local patterns, that this graph has a directed cycle (directed path $1\rightarrow 2\rightarrow 7\rightarrow 1$), is strongly ambiguous (two directed paths $2\rightarrow 1$ and $2\rightarrow 7\rightarrow 1$) and is not weakly connected (vertices 5 and 6 are not connected to the rest of the digraph).

\paragraph{Expressing Properties via Forbidden Patterns}

We now demonstrate that all the mentioned nonlocal properties of graphs have become local in the extended internal representation of the code strings $L_\textbf{NC-DIGRAPH}$ for noncrossing digraphs.  

These distinctive properties of graph families reduce to  forbidden patterns in bracket strings and then compile into regular constraint languages.  These are presented in Table \ref{family}.   To keep the patterns simple, subsets of brackets are defined:
\begin{center}
\begin{tabular}{l}
\scalebox{.85}{%
\begin{tabular}{p{.9cm}lll}
$L_\texttt{/}$      & \texttt{[}-,\texttt{/}-brackets &
$L_\texttt{<}$      & \texttt{[}-,\texttt{<}-brackets\\ 
$R_\texttt{>}$      & \texttt{]}-,\texttt{>}-brackets &
$R_\texttt{\backslash}$& \texttt{]}-,\texttt{\textbackslash}-brackets\\ 
$B$                 & \texttt{\small\{}, \texttt{\small\}} &
$R$                 & $R_\texttt{>}\cup R_\texttt{\textbackslash}$\\
$R_\text{loose}$    & $\texttt{\}},\; \texttt{>.},\; \texttt{\backslash.},\; \texttt{].}\;$ &
$R_{\nonloose}$  & $R - R_\text{loose}$\\
$R_\textrm{right}$  & $R$ reaching \texttt{\small F$,$Q$,$I$,$A}&
$R_\textrm{left}$   & $R$ reaching \texttt{\small f$,$q$,$i$,$a}\\
$R_\textrm{right2}$  & {\small\texttt{>P}, \texttt{>2}, \texttt{>E}, \texttt{\textbackslash{}E}, \texttt{]E}}
& 
$R_\textrm{left2}$   & {\small\texttt{\textbackslash{}p}, \texttt{\textbackslash2}, \texttt{\textbackslash{}e}, \texttt{>e}, \texttt{]e}} \\
$\Sigma_\textrm{in}$  & $L_\texttt{<}\cup R_\texttt{>}$ &
$\overline{B}$ & $\Sigma - B$
\end{tabular}}\\
\scalebox{.85}{\begin{tabular}{p{.9cm}l}
$R_\textrm{vergent}$& non-\texttt{'} $R$ reaching \texttt{I,Q,q,A,a,C,c}\\
$\Sigma_\textrm{or}$ & all brackets for oriented edges \\
$\Sigma_\textrm{inv}$    & all brackets for inverted edges
\end{tabular}}
\end{tabular}
\end{center}

\section{Validation Experiments}

The current experiments were designed (1) to help in developing the components of $Reg_\textrm{lat}$ and the constraint languages of axiomatic properties, (2) to validate the representation, the constraint languages and their unambiguity, (3) to learn about the ontology and (4) to sample the integer sequences associated with the cardinality of each family in the ontology.

\paragraph{Finding the Components} Representations of $Reg_\textrm{lat}$ were built with scripts written using a finite-state toolkit \cite{FOMA} that supports rapid exploration with regular languages and transducers. 

\paragraph{Validation of Languages}  Our scripts presented alternative approaches to compute languages of encoded digraphs with $n$ vertices up to $n=9$. We also implemented a Python script that enumerated elements of families of graphs up to $n=6$.
The solutions were used to cross-validate one another.

The constraint $G_n=\overline{B}^*(\texttt{\{\}}\overline{B}^*)^{n-1}$ ensures $n$-vertices in encoded digraphs. The finite set of encoded acyclic 5-vertex digraphs was computed with a finite-state approach \cite{YJ2012FSMNLP} that takes the input projection of the composition
\[\small
\text{Id}(Reg_{lat} \cap A_D \cap G_5) \circ T_{55} \circ  T_{55} \circ T_{55} \circ T_{55} \circ T_{55} \circ \text{Id}(\epsilon) 
\]
where $\text{Id}$ defines an identity relation and transducer $T_{55}$ eliminates matching adjacent brackets.  This composition differs from the typical use where the purpose is to construct a regular relation \cite{Kaplan:1994:RMP:204915.204917} or its output projection \cite{Roche:1996:TPF:974697.974706,DBLP:journals/coling/Oflazer03}.

For digraphs with a lot of vertices, we had an option to 
employ a dynamic programming scheme \cite{DBLP:conf/birthday/Yli-Jyra12} that uses weighted transducers.

\paragraph{Building the Ontology} To build the ontology in Figure \ref{lattice} we first found out which combinations of digraph properties co-occur to define distinguishable families of digraphs.  After the nodes of the lattice were found, we were able to see the partial order between these.

\paragraph{Integer Sequences}
We sampled, for important families of digraphs, the prefixes of their related integer sequences.  We found out that each family of graphs is pretty much described by its cardinality, see Table \ref{Fams}.  In many cases, the number sequence was already well documented \cite{OEIS}.

\begin{table*}[ht]\small
  \caption{\label{Fams}Characterizations for some noncrossing families of digraphs and graphs}
  \vspace{1ex}
  \newcommand\Vdigraph{\raisebox{-3.5ex}{$\scalebox{.56}{\begin{dependency}[theme = simple, edge style=->]\begin{deptext}[column sep=1em]1 \& 2 \& 3 \& 4 \& 5 \\
  \end{deptext}
  \depedge{1}{3}{}
  \depedge{1}{4}{}
  \depedge{1}{5}{}
  \depedge{3}{4}{}
  \depedge[edge style=<->]{4}{5}{}
  \end{dependency}}$}}

\newcommand\VWc{\raisebox{-3ex}{$\scalebox{.56}{\begin{dependency}[theme = simple, edge style=->]\begin{deptext}[column sep=1em]1 \& 2 \& 3 \& 4 \& 5 \\
  \end{deptext}
  \depedge{1}{2}{}
  \depedge{1}{3}{}
  \depedge{1}{4}{}
  \depedge{1}{5}{}
  \depedge{4}{5}{}
  \depedge[edge style=<->]{4}{5}{}
  \end{dependency}}$}}

\newcommand\VAd{\raisebox{-3.5ex}{$\scalebox{.56}{\begin{dependency}[theme = simple, edge style=->]\begin{deptext}[column sep=1em]1 \& 2 \& 3 \& 4 \& 5 \\
  \end{deptext}
  \depedge[edge style=<-]{1}{3}{}
  \depedge{1}{4}{}
  \depedge{1}{5}{}
  \depedge[edge style=<-]{4}{5}{}
  \end{dependency}}$}}

\newcommand\VWcDag{\raisebox{-3.5ex}{$\scalebox{.56}{\begin{dependency}[theme = simple, edge style=->]\begin{deptext}[column sep=1em]1 \& 2 \& 3 \& 4 \& 5 \\
  \end{deptext}
  \depedge[edge style=<-]{1}{2}{}
  \depedge{1}{3}{}
  \depedge{1}{4}{}
  \depedge{1}{5}{}
  \depedge[edge style=<-]{4}{5}{}
  \end{dependency}}$}}

\newcommand\VPwDag{\raisebox{-3.5ex}{$\scalebox{.56}{\begin{dependency}[theme = simple, edge style=->]\begin{deptext}[column sep=1em]1 \& 2 \& 3 \& 4 \& 5 \\
          \end{deptext}
          \depedge{1}{2}{}
          \depedge{1}{3}{}
          \depedge{1}{4}{}
          \depedge{1}{5}{}
          \depedge[edge style=<-]{4}{5}{}
      \end{dependency}}$}}

\newcommand\VAdUs{\raisebox{-3.5ex}{$\scalebox{.56}{\begin{dependency}[theme = simple, edge style=->]\begin{deptext}[column sep=1em]1 \& 2 \& 3 \& 4 \& 5 \\
  \end{deptext}
  \depedge[edge style=<-]{2}{3}{}
  \depedge{1}{2}{}
  \depedge{4}{5}{}
  \depedge{1}{5}{}
  \end{dependency}}$}}

\newcommand\VOut{\raisebox{-3.5ex}{$\scalebox{.56}{\begin{dependency}[theme = simple, edge style=->]\begin{deptext}[column sep=1em]1 \& 2 \& 3 \& 4 \& 5 \\
  \end{deptext}
  \depedge[edge style=<-]{1}{2}{}
  \depedge[edge style=<-]{2}{3}{}
  \depedge{1}{4}{}
  \depedge{1}{5}{}
  \depedge{1}{3}{}
  \end{dependency}}$}}

\newcommand\VAuOr{\raisebox{-2ex}{$\scalebox{.53}{\begin{dependency}[theme = simple, edge style=->]\begin{deptext}[column sep=1em]1 \& 2 \& 3 \& 4 \& 5 \\
  \end{deptext}
  \depedge{1}{4}{}
  \depedge{1}{5}{}
  \depedge{3}{4}{}
  \end{dependency}}$}}

\newcommand\VAuOut{\raisebox{-2.5ex}{$\scalebox{.53}{\begin{dependency}[theme = simple, edge style=->]\begin{deptext}[column sep=1em]1 \& 2 \& 3 \& 4 \& 5 \\
  \end{deptext}
  \depedge{1}{5}{}
  \depedge{1}{4}{}
  \depedge{1}{3}{}
  \depedge[edge style=<->]{1}{2}{}
  \end{dependency}}$}}

\newcommand\VAdUsCw{\raisebox{-2.5ex}{$\scalebox{.53}{\begin{dependency}[theme = simple, edge style=->]\begin{deptext}[column sep=1em]1 \& 2 \& 3 \& 4 \& 5 \\
  \end{deptext}
  \depedge{1}{2}{}
  \depedge{1}{3}{}
  \depedge{1}{5}{}
  \depedge[edge style=<-]{3}{4}{}
  \depedge{4}{5}{}
  \end{dependency}}$}}

\newcommand\VAuUsAdCwOr{\raisebox{-3.5ex}{$\scalebox{.53}{\begin{dependency}[theme = simple, edge style=->]\begin{deptext}[column sep=1em]1 \& 2 \& 3 \& 4 \& 5 \\
  \end{deptext}
  \depedge{1}{3}{}
  \depedge{1}{4}{}
  \depedge{1}{5}{}
  \depedge{2}{3}{}
  \end{dependency}}$}}

\newcommand\VAdOutPw{\raisebox{-3.5ex}{$\scalebox{.56}{\begin{dependency}[theme = simple, edge style=->]\begin{deptext}[column sep=1em]1 \& 2 \& 3 \& 4 \& 5 \\
  \end{deptext}
  \depedge{1}{3}{}
  \depedge{1}{4}{}
  \depedge{1}{5}{}
  \end{dependency}}$}}

\newcommand\VAdOut{\raisebox{-3.5ex}{$\scalebox{.56}{\begin{dependency}[theme = simple, edge style=->]\begin{deptext}[column sep=1em]1 \& 2 \& 3 \& 4 \& 5 \\
  \end{deptext}
  \depedge{1}{4}{}
  \depedge{1}{5}{}
  \depedge[edge style=<-]{1}{3}{}
  \end{dependency}}$}}

\newcommand\VInv{\raisebox{-3.5ex}{$\scalebox{.56}{\begin{dependency}[theme = simple, edge style=-]\begin{deptext}[column sep=1em]1 \& 2 \& 3 \& 4 \& 5 \\
  \end{deptext}
  \depedge{1}{3}{}
  \depedge{1}{4}{}
  \depedge{1}{5}{}
  \depedge{3}{4}{}
  \depedge{4}{5}{}
  \end{dependency}}$}}

\newcommand\VAdCwOut{\raisebox{-3.5ex}{$\scalebox{.56}{\begin{dependency}[theme = simple, edge style=->]\begin{deptext}[column sep=1em]1 \& 2 \& 3 \& 4 \& 5 \\
  \end{deptext}
  \depedge{1}{3}{}
  \depedge{1}{4}{}
  \depedge{1}{5}{}
  \depedge[edge style=<-]{1}{2}{}
  \end{dependency}}$}}

\newcommand\VAuInv{\raisebox{-3.5ex}{$\scalebox{.56}{\begin{dependency}[theme = simple, edge style=-]\begin{deptext}[column sep=1em]1 \& 2 \& 3 \& 4 \& 5 \\
  \end{deptext}
  \depedge{1}{3}{}
  \depedge{1}{4}{}
  \depedge{1}{5}{}
  \end{dependency}}$}}

\newcommand\VCwInv{\raisebox{-3.5ex}{$\scalebox{.56}{
    \begin{dependency}[theme = simple, edge style=-]\begin{deptext}[column sep=1em]1 \& 2 \& 3 \& 4 \& 5 \\
  \end{deptext}
  \depedge{1}{2}{}
  \depedge{1}{3}{}
  \depedge{1}{4}{}
  \depedge{4}{5}{}
  \depedge{1}{5}{}
  \end{dependency}}$}}

    \newcommand\VAdCwOutPw{\raisebox{-3.5ex}{$\scalebox{.56}{\begin{dependency}[theme = simple, edge style=->]\begin{deptext}[column sep=1em]1 \& 2 \& 3 \& 4 \& 5 \\
  \end{deptext}
  \depedge{1}{2}{}
  \depedge{1}{3}{}
  \depedge{1}{4}{}
  \depedge{1}{5}{}
  \end{dependency}}$}}

\newcommand\VAuCwInv{\raisebox{-3.5ex}{$\scalebox{.56}{\begin{dependency}[theme = simple, edge style=-]
        \begin{deptext}[column sep=1em]1 \& 2 \& 3 \& 4 \& 5 \\
        \end{deptext}
        \depedge{1}{2}{}
        \depedge{1}{3}{}
        \depedge{1}{4}{}
        \depedge{1}{5}{}
        \end{dependency}}$}}

\newcommand\eg[1]{\raisebox{-3.5ex}{$\scalebox{.56}{\begin{dependency}[theme = simple, edge style=->]\begin{deptext}[column sep=1em]1\&2\&3\&4\&5 \\ \end{deptext}#1\end{dependency}}$}}
\newcommand\edge[3]{\depedge[edge style=-]{#1}{#2}{#3}}

\newcolumntype{R}[1]{>{\RaggedLeft\hspace{0pt}}p{#1}}
\centering
\scalebox{.95}{\begin{tabular}{|p{.55in}@{}P{1.6in}P{.8in}@{}|p{.75in}@{}P{1.7in}@{}p{.8in}@{}|}
    \hline \small Name & \small Sequence prefix for $n=2,3,...$ & \small Example & \small Name & \small Sequence prefix for $n=2,3,...$ & \small Example \\
    \hline \hline

    \scriptsize digraph & \scriptsize \underline{(\textbf{KJ}):} 4,64,1792,\textbf{62464},2437120,101859328\!\!\!\!\!\!\!\!\!\!\!\!\!\!\!\!\!\!\!\!\!\!\!\!\!\!\!
    \newline \scriptsize $h_\textit{lat}(D_{55} \cap G_n \cap \textit{Reg}_\textit{lat})$
    & \eg{\depedge[edge start x offset=-2pt]{1}{4}{}\depedge[edge start x offset=-2pt,edge end x offset=2pt]{1}{3}{}\depedge[edge style=<-]{1}{2}{}\depedge{2}{3}{}\depedge[edge style=<->]{3}{4}{}}
    & \scriptsize weakly projective \newline digraph & \scriptsize 4,36,480,\textbf{7744},138880,2661376
    \newline \scriptsize $h_\textit{lat}(D_{55} \cap G_n \cap \textit{Reg}_\textit{lat} \cap P_W)$
    & \eg{\depedge[edge start x offset=-2pt]{1}{4}{}\depedge[edge start x offset=-2pt]{1}{3}{}\depedge[edge start x offset=-2pt]{1}{2}{}\depedge{2}{3}{}\depedge[edge style=<->]{3}{4}{}}
    \\[-1.3ex]

    \scriptsize w.c.digraph   & \scriptsize 3,54,1539,\textbf{53298},2051406,84339468
    \newline \scriptsize $h_\textit{lat}(D_{55} \cap G_n \cap \textit{Reg}_\textit{lat} \cap C_W)$
    & \eg{\depedge[edge start x offset=-2pt]{1}{5}{}\depedge[edge start x offset=-2pt]{1}{4}{}\depedge[edge start x offset=-2pt]{1}{3}{}\depedge[edge style=<-]{1}{2}{}\depedge{2}{3}{}\depedge{3}{4}{}\depedge[edge start x offset=3pt,edge style=<->,arc angle=30]{5}{4}{}}
    & \scriptsize w.p. w.c.digraph   & \scriptsize 3,26,339,\textbf{5278},90686,1658772
    \newline \scriptsize $h_\textit{lat}(D_{55} \cap G_n \cap \textit{Reg}_\textit{lat} \cap P_W \cap C_W)$
    & \eg{\depedge[edge start x offset=-2pt]{1}{5}{}\depedge[edge start x offset=-2pt]{1}{4}{}\depedge[edge start x offset=-2pt]{1}{3}{}\depedge{1}{2}{}\depedge{2}{3}{}\depedge{3}{4}{}\depedge[edge style=<->,edge start x offset=3pt, arc angle=30]{5}{4}{}}
    \\[-1ex] 

    \scriptsize unamb.digr. & \scriptsize 4,39,529,\textbf{8333},142995,2594378
    \newline \scriptsize $h_\textit{lat}(D_{55} \cap G_n \cap \textit{Reg}_\textit{lat} \cap U_S)$
    & \eg{\depedge[edge start x offset=-4pt]{1}{4}{}\depedge[edge start x offset=-2pt,edge style=<-]{1}{3}{}\depedge[edge style=<->]{1}{2}{}\depedge[edge style=<-]{3}{4}{}}
    & \scriptsize w.p. unamb.digr. & \scriptsize 4,29,275,\textbf{3008},35884,453489
    \newline \scriptsize $h_\textit{lat}(D_{55} \cap G_n \cap \textit{Reg}_\textit{lat} \cap P_W \cap U_S)$
    & \eg{\depedge{1}{5}{}\depedge{1}{2}{}\depedge[edge style=<-]{2}{4}{}\depedge[edge start x offset=2pt,edge style=<->]{2}{3}{}\depedge[edge style=<-]{5}{4}{}}
    \\[-1ex]

    \scriptsize m-forest & \scriptsize 4,37,469,\textbf{6871},109369,1837396,32062711\!\!\!\!\!
    \newline \scriptsize $h_\textit{lat}(D_{55} \cap G_n \cap \textit{Reg}_\textit{lat} \cap A_U)$
    & \eg{\depedge[edge start x offset=-4pt]{1}{4}{}\depedge[edge start x offset=-2pt,edge style=<-]{1}{3}{}\depedge[edge style=<->]{1}{2}{}}
    & \scriptsize w.p. m-forest & \scriptsize 4,29,273,\textbf{2939},34273,421336
    \newline \scriptsize $h_\textit{lat}(D_{55} \cap G_n \cap \textit{Reg}_\textit{lat} \cap P_W \cap A_U)$
    & \eg{\depedge{1}{4}{}\depedge{1}{2}{}\depedge[edge style=<->]{2}{3}{}}
    \\[-1ex]
    
    \scriptsize out digraph & \scriptsize  4,27,207,\textbf{1683},14229,123840,1102365
    \newline \scriptsize $h_\textit{lat}(D_{55} \cap G_n \cap \textit{Reg}_\textit{lat} \cap Out )$
    & \eg{\depedge{1}{3}{}\depedge[edge style=<-]{1}{2}{}\depedge[edge style=<-]{2}{3}{}\depedge[edge style=<->]{4}{5}{}}
    & \scriptsize w.p. out digraph & \scriptsize 4,21,129,\textbf{867},6177,45840,350379
    \newline \scriptsize $h_\textit{lat}(D_{55} \cap G_n \cap \textit{Reg}_\textit{lat} \cap P_W \cap Out )$
    & \eg{\depedge{1}{3}{}\depedge{1}{2}{}\depedge[edge style=<->]{4}{5}{}}
    \\

\if x
    \scriptsize out m-forest & \scriptsize 4,25,181,\textbf{1411},11521,97168,839575
    \newline \scriptsize $h_\textit{lat}(D_{55} \cap G_n \cap \textit{Reg}_\textit{lat} \cap A_U \cap Out)$
    & \VAuOut
    & \scriptsize w.projective \newline out m-forest & \scriptsize 4,21,129,\textbf{867},6177,45840
    \newline \scriptsize $h_\textit{lat}(D_{55} \cap G_n \cap \textit{Reg}_\textit{lat} \cap P_W \cap A_U \cap Out)$
    & 
    \\[-.8ex] 
    XXX NONINV NONPROJ 3.1 OUT M-FOREST / / [ { } ] { } > { } > { }
    XXX NONINV PROJ 3.1 OUT M-FOREST / / { } > { } > { } [ { } ]
    XXX INV PROJ 3.1 OUT M-FOREST [ { } ] { } [ { } ] { }

    \scriptsize out m-tree & \scriptsize \underline{\textbf{(A005809):}} 3,15,84,\textbf{495},3003
    \newline \scriptsize $h_\textit{lat}(D_{55} \cap G_n \cap \textit{Reg}_\textit{lat} \cap A_U \cap C_W \cap Out)$
    & 
    & \scriptsize w.projective \newline out m-tree & \scriptsize 3,11,48,\textbf{231},1183,6324
    \newline \scriptsize $h_\textit{lat}(D_{55} \cap G_n \cap \textit{Reg}_\textit{lat} \cap A_U \cap C_W \cap Out)$
    & 
    \\[1ex]

    NONINV NONPROJ 2.2 W.C.UNAMB / / < [ { } ] { } \\ < { } \\ > { } >
    NONINV PROJ 2.2 W.C.UNAMB / / { } > < [ { } ] { } \\ / { } > >

    XXX INV PROJ 3.2 MIXED-TREE [ { } ] [ { } ] [ { } ] [ { } ]
    XXX INV NONPROJ 3.2 MIXED-TREE [ [ [ [ { } ] { } ] { } ] { } ]

    XXX NONINV PROJ 3.2 MIXED-TREE / / / { } > [ { } ] { } > { } >
    XXX NONINV NONPROJ 3.2 MIXED-TREE / / < [ { } ] { } \\ { } > { } >

   \fi
   \hline
   
    %
    \scriptsize or. digraph & \scriptsize 3,27,405,\textbf{7533},156735,3492639,77539113\!\!\!\!\!\!
    \newline \scriptsize $h_\textit{lat}(D_{55} \cap G_n \cap \textit{Reg}_\textit{lat} \cap O)$
    & \eg{\depedge[edge start x offset=-2pt]{1}{4}{}\depedge[edge start x offset=-2pt]{1}{3}{}\depedge[edge style=<-]{1}{2}{}\depedge{3}{2}{}\depedge[edge style=<-]{4}{3}{}}
    & \scriptsize w.p. or.digraph & \scriptsize see w.p.dag
    \newline \scriptsize $h_\textit{lat}(D_{55} \cap G_n \cap \textit{Reg}_\textit{lat} \cap P_W \cap O)$
    & \raisebox{-2ex}{\scriptsize \ \ \ see w.p.dag}
    \\[-1ex]

    \scriptsize dags & \scriptsize \underline{\textbf{(A246756):}} 3,25,335,\textbf{5521},101551
    \newline \scriptsize $h_\textit{lat}(D_{55} \cap G_n \cap \textit{Reg}_\textit{lat} \cap A_D)$
    & \eg{\depedge[edge start x offset=-2pt]{1}{4}{}\depedge[edge start x offset=-2pt]{1}{3}{}\depedge[edge style=<-]{1}{2}{}\depedge[edge style=<-]{3}{2}{}\depedge[edge style=<-]{4}{3}{}}
    & \scriptsize w.p. dag & \scriptsize 3,21,219,\textbf{2757},38523, 574725, 8967675
    \newline \scriptsize $h_\textit{lat}(D_{55} \cap G_n \cap \textit{Reg}_\textit{lat} \cap P_W \cap A_D)$
    & \eg{\depedge{1}{4}{}\depedge{1}{3}{}\depedge{1}{2}{}\depedge{2}{3}{}\depedge{3}{4}{}}
    \\[-1ex] 

    \scriptsize w.c. dag & \scriptsize \underline{(\textbf{KJ}):} 2,18,242,\textbf{3890},69074,1306466 
    \newline \scriptsize $h_\textit{lat}(D_{55} \cap G_n \cap \textit{Reg}_\textit{lat} \cap A_D \cap C_W)$
    & \eg{\depedge[edge start x offset=-2pt]{1}{5}{}\depedge[edge start x offset=-2pt]{1}{4}{}\depedge[edge start x offset=-2pt]{1}{3}{}\depedge[edge style=<-]{1}{2}{}\depedge[edge style=<-]{3}{2}{}\depedge[edge style=<-]{4}{3}{}\depedge[edge style=<-]{5}{4}{}}
    & \scriptsize w.p. w.c. dag & \scriptsize 2,14,142,\textbf{1706},22554,316998,4480592
    \newline \scriptsize $h_\textit{lat}(D_{55} \cap G_n \cap \textit{Reg}_\textit{lat} \cap P_W \cap A_D \cap C_W)$
    & \eg{\depedge{1}{5}{}\depedge{1}{4}{}\depedge{1}{3}{}\depedge{1}{2}{}\depedge{2}{3}{}\depedge{3}{4}{}\depedge{4}{5}{}}
    \\[-1ex]  

    \scriptsize multitree & \scriptsize 3,19,167,\textbf{1721},19447,233283,2917843
    \newline \scriptsize $h_\textit{lat}(D_{55} \cap G_n \cap \textit{Reg}_\textit{lat} \cap A_D \cap U_S)$
    & \!\!\!\!\!\raisebox{-1.3ex}{\scriptsize \begin{tabular}{l}see oriented forest\\ or w.c. multitree{}\end{tabular}}
    & \scriptsize w.p. multitree & \scriptsize 3,17,129,\textbf{1139},11005,112797,1203595
    \newline \scriptsize $h_\textit{lat}(D_{55} \cap G_n \cap \textit{Reg}_\textit{lat} \cap P_W \cap A_D \cap U_S)$
    & \eg{\depedge{1}{4}{}\depedge{1}{2}{}\depedge[edge start x offset=1pt,edge style=<-]{2}{3}{}\depedge{3}{4}{}}
    \\[-1ex] 

    \scriptsize or.forest & \scriptsize 3,19,165,\textbf{1661},18191,210407,2528777 
    \newline \scriptsize $h_\textit{lat}(D_{55} \cap G_n \cap \textit{Reg}_\textit{lat} \cap A_D \cup A_U)$
    & \eg{\depedge[edge start x offset=-2pt]{1}{4}{}\depedge[edge style=<-]{1}{3}{}\depedge[edge style=<-]{1}{2}{}}
    & \scriptsize w.p. or.forest & \scriptsize 3,17,127,\textbf{1089},10127,99329,1010189
    \newline \scriptsize $h_\textit{lat}(D_{55} \cap G_n \cap \textit{Reg}_\textit{lat} \cap P_W \cap A_D \cup A_U)$
    & \eg{\depedge{1}{4}{}\depedge{1}{2}{}\depedge[edge style=<-]{2}{3}{}}
    \\[-1ex] 

    \scriptsize w.c. multitree & \scriptsize 2,12,98,\textbf{930},9638,105798,1201062
    \newline \scriptsize $h_\textit{lat}(D_{55} \cap G_n \cap \textit{Reg}_\textit{lat} \cap A_D \cap U_S \cap C_W)$
    & \eg{\depedge[edge start x offset=-2pt]{1}{5}{}\depedge[edge start x offset=-2pt]{1}{3}{}\depedge[edge style=<-]{1}{2}{}\depedge[edge style=<-]{3}{4}{}\depedge{4}{5}{}}
    & \scriptsize w.p. w.c. multitree & \scriptsize 2,10,68,\textbf{538},4650,42572,404354
    \newline \scriptsize $h_\textit{lat}(D_{55} \cap G_n \cap \textit{Reg}_\textit{lat} \cap P_W \cap A_D \cap U_S \cap C_W)$
    & \eg{\depedge{1}{5}{}\depedge{1}{4}{}\depedge{1}{2}{}\depedge[edge start x offset=1pt,edge style=<-]{2}{3}{}\depedge{3}{4}{}}
    \\[-1ex]

    \scriptsize out or.forest & \scriptsize 3,16,105,\textbf{756},5738,45088,363221
    \newline \scriptsize $h_\textit{lat}(D_{55} \cap G_n \cap \textit{Reg}_\textit{lat} \cap A_D \cap Out)$
    & \eg{\depedge[edge start x offset=-2pt]{1}{4}{}\depedge[edge start x offset=-2pt]{1}{3}{}\depedge[edge style=<-]{1}{2}{}}
    & \scriptsize w.p. out or.forest & \scriptsize \underline{\textbf{(A003169):}} 3,14,79,\textbf{494},3294,22952 
    \newline \scriptsize $h_\textit{lat}(D_{55} \cap G_n \cap \textit{Reg}_\textit{lat} \cap P_W \cap A_D \cap Out)$
    & \eg{\depedge{1}{2}{}\depedge{1}{3}{}\depedge{1}{4}{}}
    \\[-1ex] 
    
    \scriptsize polytree & \scriptsize \underline{\textbf{(A153231):}}  2,12,96,\textbf{880},8736,91392 
    \newline \scriptsize $h_\textit{lat}(D_{55} \cap G_n \cap \textit{Reg}_\textit{lat} \cap A_D \cap C_W \cap A_U)$
    & \eg{\depedge[edge start x offset=-2pt]{1}{5}{}\depedge[edge start x offset=-2pt]{1}{4}{}\depedge[edge style=<-]{1}{2}{}\depedge[edge style=<-]{1}{3}{}}
    & \scriptsize w.p. polytree & \scriptsize \underline{\textbf{(A027307):}}2,10,66,\textbf{498},4066,34970
    \newline \scriptsize $h_\textit{lat}(D_{55} \cap G_n \cap \textit{Reg}_\textit{lat} \cap P_W \cap A_D \cap C_W \cap A_U)$
    & \eg{\depedge{1}{2}{}\depedge{1}{4}{}\depedge{1}{5}{}\depedge[edge style=<-]{2}{3}{}}
    \\[-1ex] 

    \scriptsize out or.tree & \scriptsize \underline{\textbf{(A174687):}} 2,9,48,\textbf{275},1638,9996 
    \newline \scriptsize $h_\textit{lat}(D_{55} \cap G_n \cap \textit{Reg}_\textit{lat} \cap A_D \cap C_W \cap Out)$
    & \eg{\depedge[edge start x offset=-2pt]{1}{3}{}\depedge[edge start x offset=-2pt]{1}{4}{}\depedge[edge start x offset=-2pt]{1}{5}{}\depedge[edge style=<-]{1}{2}{}}
    & \scriptsize projective\newline out or.tree & \scriptsize \underline{\textbf{(A006013):}} 2,7,30,\textbf{143},728,3876,21318 
    \newline \scriptsize $h_\textit{lat}(D_{55} \cap G_n \cap \textit{Reg}_\textit{lat}  \cap P_W \cap A_D \cap C_W \cap Out)$
    & \eg{\depedge{1}{2}{}\depedge{1}{3}{}\depedge{1}{4}{}\depedge{1}{5}{}}
    \\
    \hline

    \scriptsize graph & \scriptsize \underline{\textbf{(A054726):}} 2,8,48,\textbf{352},2880,25216 
    \newline \scriptsize $h_\textit{lat}(D_{55} \cap G_n \cap \textit{Reg}_\textit{lat} \cap I)$
    &
    \multicolumn{1}{c}{\eg{\edge{1}{2}{}\edge{2}{3}{}\edge{1}{3}{}\edge{3}{4}{}\edge{1}{4}{}}}
    &
    \scriptsize connected graph & \scriptsize \underline{\textbf{(A007297):}} 1,4,23,\textbf{156},1162,9192 
    \newline \scriptsize $h_\textit{lat}(D_{55} \cap G_n \cap \textit{Reg}_\textit{lat} \cap I \cap C_W )$
    & \eg{\edge{1}{2}{}\edge{2}{3}{}\edge{1}{3}{}\edge{3}{4}{}\edge{1}{4}{}\edge{4}{5}{}\edge{1}{5}{}}
    \\[-1ex]
    
    \scriptsize forest &  \scriptsize \underline{\textbf{(A054727):}} 2,7,33,\textbf{181},1083,6854 
    \newline \scriptsize $h_\textit{lat}(D_{55} \cap G_n \cap \textit{Reg}_\textit{lat} \cap I \cap A_U )$
    & \multicolumn{1}{c}{\eg{\edge{1}{2}{}\edge{1}{3}{}\edge{1}{4}{}}}
    &
    \scriptsize tree  & \scriptsize \underline{\textbf{(A001764,YJ):}} 1,3,12,\textbf{55},273,1428,7752 
    \newline \scriptsize $h_\textit{lat}(D_{55} \cap G_n \cap \textit{Reg}_\textit{lat} \cap I \cap A_U \cap C_W)$
    & \eg{\edge{1}{2}{}\edge{1}{3}{}\edge{1}{4}{}\edge{1}{5}{}}
    \\

    \ifx
    \scriptsize n/a & \scriptsize \underline{\textbf{(A001006):}} 2,4,9,\textbf{21},51,127,323
    \newline \scriptsize $h_\textit{lat}(D_{55} \cap G_n \cap \textit{Reg}_\textit{lat} \cap I \cap Out)$
    &   \eg{\edge{1}{2}{}\edge{4}{5}{}}
    &
    &
    &
    \\
    \fi

    \hline
  \end{tabular}}

   \textbf{A} = \cite{OEIS}, \textbf{KJ} = \citet{Kuhlmann2015} or \citet{Kuhlmann-Jonsson-2015}, \textbf{YJ} = \citet{DBLP:conf/birthday/Yli-Jyra12} 
\end{table*}

\section{The Formal Basis of Practical Parsing}

While presenting a practical parser implementation is outside of the scope of this paper, which focuses in the theory, we outline in this section the aspects to take into account when applying our representation to build practical natural language parsers.

\paragraph{Positioned Brackets} 

In order to do inference in arc-factored parsing, we incorporate weights to the representation.  For each vertex in $G_n$, the brackets are decorated with the respective position number.  Then, we define an input-specific grammar representation where each pair of brackets in $D$ gets an arc-factored weight given the directions and the vertex numbers associated with the brackets.  

\paragraph{Grammar Intersection}
We associate, to each $G_n$, a quadratic-size context-free grammar that generates all noncrossing digraphs with $n$ vertices.  This grammar is obtained by computing (or even precomputing) the intersection $D_{55} \cap Reg_\textrm{lat} \cap G_n$ in any order, exploiting the closure of context-free languages under intersection with regular languages \cite{barhillel-etal:1961a}.  The introduction of the position numbers and weights in the Dyck language gives us, instead, a weighted grammar and its intersection \cite{LA92}.  This grammar is a compact representation for a finite set of weighted latent encoded digraphs. Additional constraints during the intersection tailors the grammar to different families of digraphs.

\paragraph{Dynamic Programming} 
The heaviest digraph is found with a dynamic programming algorithm that computes, for each nonterminal in the grammar, the weight of the heaviest subtree.  A careful reader may notice some connections to Eisner algorithm \cite{Eisner}, context-free parsing through intersection \citep{Nederhof-2003}, and a dynamic programming scheme that uses contracting transducers and factorized composition \cite{DBLP:conf/birthday/Yli-Jyra12}.  Unfortunately, space does not permit discussing the connections here.

\paragraph{Lexicalized Search Space}
In practical parsing, we want the parser behavior and the dependency structure to be sensitive to the lexical entries or features of each word.  We can replace the generic vertex description $\overline{B}^*$ in $G_n$ with subsets that depend on respective lexical entries.  Graphical constraints can be applied to some vertices but relaxed for others.   This application of current results gives a principled, graphically motivated solution to lexicalized control over the search space.

\section{Conclusion}

We have investigated the search space of parsers that produce noncrossing digraphs. 
Parsers that can be adapted to different needs are less dependent on artificial assumptions on the search space.
Adaptivity gives us freedom to model how the properties of digraphs are actually distributed in linguistic data. As the adaptive data analysis deserves to be treated in its own right, the current work focuses on the separation of the parsing algorithm from the properties of the search space.  

This paper makes four significant contributions.

\paragraph{Contribution 1: Digraph Encoding}

The paper introduces, for noncrossing digraphs, an encoding that uses brackets to indicate edges.

Bracketed trees are widely used in generative syntax, treebanks and structured document formats.  There are established conversions between phrase structure and projective dependency trees, but the currently advocated edge bracketing is expressive and captures more than just projective dependency trees.  This capacity is welcome as syntactic and semantic analysis with dependency graphs is a steadily growing field.  

The edge bracketing creates new avenues for the study of connections between noncrossing graphs and context-free languages, as well as their recognizable properties. By demonstrating that digraphs can be treated as strings, we suggest that practical parsing to these structures could be implemented with existing methods that restrict context-free grammars to a regular yield language.

\paragraph{Contribution 2: Context-Free Properties}

Acyclicity and other important properties of noncrossing digraphs are expressible as unambiguous context-free sets of encoded noncrossing digraphs.  This facilitates the incorporation of property testing to dynamic programming algorithms that implement exact inference.

Descriptive complexity helps us understand to which degree various graphical properties are local and could be incorporated into efficient dynamic programming during exact inference.  
It is well known that acyclicity and connecticity are not definable in first-order logic (FO)  
while they can be defined easily in monadic second order logic (MSO) \cite{Courcelle-1997}.  MSO involves set-valued variables whose use in dynamic programming algorithms and tabular parsing is inefficient.  MSO queries have a brute force transformation to first-order (FO) logic, but this 
does not generally help either as it is well known that MSO can express intractable problems.

The interesting observation of the current work is that some MSO definable properties of digraphs become local in our extended encoding.  This encoding is linear compared to the size of digraphs: each  string over the extended bracket alphabet encodes a fixed assignment of MSO variables.  The properties of noncrossing digraphs now reduce to properties of bracketed trees with linear amount of latent information that is fixed for each digraph.

A deeper explanation for our observation comes from the fact that the treewidth of noncrossing and other outerplanar graphs is bounded to 2.  When the treewidth is bounded, all MSO definable properties, including the intractable ones, become linear time decidable for individual structures \cite{COURCELLE199012}.  They can also be decided in a logarithmic amount of writable space \cite{Elberfeld:2010:LVT:1917827.1918379}, e.g. with element indices instead of sets.
By combining this insight with Proposition \ref{prop}, we obtain a logspace solution for testing acyclicity of a noncrossing graph (Figure \ref{logspace}).  \begin{figure}[t]
\centering
{\scriptsize\begin{verbatim}
  func noncrossing_ACYCU(n,E):
     for {u,y} in E and u < y:    # covering edge
        [v,p] = [u,u]
        while p != -1:            # chain continues
           [v,p] = [p,-1]
           for vv in [v+1,...,y]: # next vertex
              if {v,vv} in E and {v,vv} != {u,y}:
                 if vv == y:
                    return False  # found cycle uvy
                 p = vv           # find longest edge
     return True                  # acyclic
\end{verbatim}}\vspace{-2ex}
\caption{\label{logspace}Testing \ACYCU{} in logarithmic space}
\end{figure}

Although bounded treewidth is a weaker constraint than so-called bounded treedepth that would immediately guarantee first-order definability \cite{Elberfeld:2016:FMS:2996393.2946799}, it can sometimes turn intractable search problems to dynamic programming algorithms \cite{Akutsu2012}.  In our case, Proposition \ref{prop} 
gave rise to unambiguous context-free subsets of $L_\textbf{NC-DIGRAPH}$.  These can be recognized with 
dynamic programming and used in efficient constrained inference 
when we add vertex indices to the brackets and weights to the grammar of the corresponding Dyck language.

\paragraph{Contribution 3: Digraph Ontology}

The context-free properties of encoded digraphs have elegant nonderivative language representations and they generate a semi-lattice under language intersection.  Although context-free languages are not generally closed under intersection, all combinations of the properties in this lattice are context-free and define natural families of digraphs.  
The nonderivative representations for our axiomatic properties share the same Dyck language $D_{55}$ and homomorphism, but differ in terms of forbidden patterns.  As a consequence, also any conjunctive combination of these two properties shares these components and thus define  a context-free language.  The obtained semilattice is an ontology of families of noncrossing digraphs.

Our ontology contains important families of noncrossing digraphs used in syntactic and semantic dependency parsing: out trees, dags, and weakly connected digraphs.   It shows the entailment between the properties and proves the existence of less known families of noncrossing digraphs such as strongly unambiguous digraphs and oriented graphs, multitrees, oriented forests and polytrees.  These are generalizations of out oriented trees. However, these families can still be weakly projective.  Table \ref{Fams} shows integer sequences obtained by enumerating digraphs in each family.  At least twelve of these sequences are previously known, which indicates that the families are natural.

We used a finite-state toolkit to build the components of the nongenerative language representation for latent encoded digraphs and the axioms.\footnote{The finite-state toolkit scripts and a Python-based graph enumerator are available at \newline
\url{https://github.com/amikael/ncdigraphs} .} 

\paragraph{Contribution 4: Generic Parsing}

The fourth contribution of this paper is to show that parsing algorithms can be separated from the formal properties of their search space.

All the presented families of digraphs can be treated by parsers and other algorithms (e.g. enumeration algorithms) in a uniform manner.  The parser's inference rules can stay constant and the choice of the search space is made by altering the regular component of the language representation. 

The ontology of the search space can be combined with a constraint relaxation strategy, for example, when an out tree is a preferred analysis, but a dag is also possible as an analysis when no tree is strong enough.  The flexibility applies also to dynamic programming algorithms that complement with the encoding and allow inference of best dependency graphs in a family simply by intersection with a weighted CFG grammar for a Dyck language that models position-indexed edges of the complete digraph.

Since the families of digraphs are distinguished by forbidden local patterns, the choice of search space can be made purely on lexical grounds, blending well with lexicalized parsing and allowing possibilities such as choosing, per each word, what kind of structures the word can go with.

\paragraph{Future work}

\changed{We are planning to extend} the coverage of the approach by exploring 1-endpoint-crossing and $MH_k$ trees \cite{Pitler2013,GomCL2016}, and related digraphs --- see \cite{YJ2004COLING,GomCarWeiCL2011}. 
\changed{Properties such as \emph{weakly projective}, \emph{out}, and \emph{strongly unambiguous} prompt further study.}  

\changed{An interesting avenue for future work is to explore higher order factorizations for noncrossing digraphs and the related inference. We would also like to have more insight on the transformation of MSO definable properties to the current framework and to logspace algorithms.}

\section*{Acknowledgements}

\changed{AYJ has received funding as Research Fellow from the Academy of Finland (dec. No 270354 - A Usable Finite-State Model for Adequate Syntactic Complexity) and Clare Hall Fellow from the University of Helsinki (dec. RP 137/2013).  CGR has received funding from the European Research Council (ERC) under the European Union's Horizon 2020 research and innovation programme (grant agreement No 714150 - FASTPARSE) and from the TELEPARES-UDC project (FFI2014-51978-C2-2-R) from MINECO. The comments of Juha Kontinen, Mark-Jan Nederhof and the anonymous reviewers helped to improve the paper.}

\bibliographystyle{acl_natbib}
\bibliography{cubic}

\begin{thebibliography}{}
\expandafter\ifx\csname natexlab\endcsname\relax\def\natexlab#1{#1}\fi

\bibitem[{Akutsu and Tamura(2012)}]{Akutsu2012}
Tatsuya Akutsu and Takeyuki Tamura. 2012.
\newblock \href{https://doi.org/10.1007/978-3-642-32589-2\_10}{A
  polynomial-time algorithm for computing the maximum common subgraph of
  outerplanar graphs of bounded degree}.
\newblock In Branislav Rovan, Vladimiro Sassone, and Peter Widmayer, editors,
  {\em Mathematical Foundations of Computer Science 2012: 37th International
  Symposium, MFCS 2012, Bratislava, Slovakia, August 27-31, 2012.
  Proceedings\/}, Springer Berlin Heidelberg, Berlin, Heidelberg, pages 76--87.
\newblock
  \href{https://doi.org/10.1007/978-3-642-32589-2\_10}{https://doi.org/10.1007/978-3-642-32589-2\_10}.

\bibitem[{Baldridge and Kruijff(2003)}]{Baldridge:2003:MCC:1067807.1067836}
Jason Baldridge and Geert-Jan~M. Kruijff. 2003.
\newblock \href{https://doi.org/10.3115/1067807.1067836}{Multi-modal
  combinatory categorial grammar}.
\newblock In {\em Proceedings of EACL'03: the Tenth Conference on European
  Chapter of the Association for Computational Linguistics - Volume 1\/}.
  Association for Computational Linguistics, Budapest, Hungary, pages 211--218.
\newblock
  \href{https://doi.org/10.3115/1067807.1067836}{https://doi.org/10.3115/1067807.1067836}.

\bibitem[{Bar-Hillel et~al.(1961)Bar-Hillel, Perles, and
  Shamir}]{barhillel-etal:1961a}
Yehoshua Bar-Hillel, Micha Perles, and Eliahu Shamir. 1961.
\newblock On formal properties of simple phrase structure grammars.
\newblock {\em Zeitschrift f\"ur Phonologie, Sprachwissenschaft und
  Kommunikationsforschung\/} 14:113--124.

\bibitem[{Chomsky and Sch\"utzenberger(1963)}]{CS1963}
Noam Chomsky and Marcel-Paul Sch\"utzenberger. 1963.
\newblock The algebraic theory of context-free languages.
\newblock {\em Computer Programming and Formal Systems\/} pages 118--161.

\bibitem[{Courcelle(1990)}]{COURCELLE199012}
Bruno Courcelle. 1990.
\newblock \href{https://doi.org/10.1016/0890-5401(90)90043-H}{The monadic
  second-order logic of graphs. {I}. recognizable sets of finite graphs}.
\newblock {\em Information and Computation\/} 85(1):12 -- 75.
\newblock
  \href{https://doi.org/10.1016/0890-5401(90)90043-H}{https://doi.org/10.1016/0890-5401(90)90043-H}.

\bibitem[{Courcelle(1997)}]{Courcelle-1997}
Bruno Courcelle. 1997.
\newblock The expression of graph properties and graph transformations in
  monadic second-order logic.
\newblock In G.~Rozenberg, editor, {\em Handbook of Graph Grammars and
  Computing by Graph Transformations\/}, World Scientific, New-Jersey, London,
  volume~1, chapter~5, pages 313--400.

\bibitem[{Eisner(1996)}]{Eisner96}
Jason Eisner. 1996.
\newblock \href{http://aclweb.org/anthology/C/C96/C96-1058.pdf}{Three new
  probabilistic models for dependency parsing: An exploration}.
\newblock In {\em Proceedings of the 16th International Conference on
  Computational Linguistics (COLING-96)\/}. Copenhagen, Denmark, pages
  340--345.
\newblock
  \href{http://aclweb.org/anthology/C/C96/C96-1058.pdf}{http://aclweb.org/anthology/C/C96/C96-1058.pdf}.

\bibitem[{Eisner and Satta(1999)}]{Eisner}
Jason Eisner and Giorgio Satta. 1999.
\newblock \href{https://doi.org/10.3115/1034678.1034748}{Efficient parsing for
  bilexical context-free grammars and {H}ead {A}utomaton {G}rammars}.
\newblock In {\em Proceedings of the 37th Annual Meeting of the Association for
  Computational Linguistics\/}. Association for Computational Linguistics,
  College Park, Maryland, USA, pages 457--464.
\newblock
  \href{https://doi.org/10.3115/1034678.1034748}{https://doi.org/10.3115/1034678.1034748}.

\bibitem[{Elberfeld et~al.(2016)Elberfeld, Grohe, and
  Tantau}]{Elberfeld:2016:FMS:2996393.2946799}
Michael Elberfeld, Martin Grohe, and Till Tantau. 2016.
\newblock \href{https://doi.org/10.1145/2946799}{Where first-order and monadic
  second-order logic coincide}.
\newblock {\em ACM Trans. Comput. Logic\/} 17(4):25:1--25:18.
\newblock
  \href{https://doi.org/10.1145/2946799}{https://doi.org/10.1145/2946799}.

\bibitem[{Elberfeld et~al.(2010)Elberfeld, Jakoby, and
  Tantau}]{Elberfeld:2010:LVT:1917827.1918379}
Michael Elberfeld, Andreas Jakoby, and Till Tantau. 2010.
\newblock \href{https://doi.org/10.1109/FOCS.2010.21}{Logspace versions of the
  theorems of {B}odlaender and {C}ourcelle}.
\newblock In {\em Proceedings of the 2010 IEEE 51st Annual Symposium on
  Foundations of Computer Science\/}. IEEE Computer Society, Washington, DC,
  USA, FOCS '10, pages 143--152.
\newblock
  \href{https://doi.org/10.1109/FOCS.2010.21}{https://doi.org/10.1109/FOCS.2010.21}.

\bibitem[{G\'{o}mez-Rodr\'{i}guez(2016)}]{GomCL2016}
Carlos G\'{o}mez-Rodr\'{i}guez. 2016.
\newblock \href{https://doi.org/10.1162/COLI\_a\_00267}{Restricted
  non-projectivity: Coverage vs. efficiency}.
\newblock {\em Computational Linguistics\/} 42(4):809--817.
\newblock
  \href{https://doi.org/10.1162/COLI\_a\_00267}{https://doi.org/10.1162/COLI\_a\_00267}.

\bibitem[{G{\'{o}}mez{-}Rodr{\'{\i}}guez
  et~al.(2011)G{\'{o}}mez{-}Rodr{\'{\i}}guez, Carroll, and
  Weir}]{GomCarWeiCL2011}
Carlos G{\'{o}}mez{-}Rodr{\'{\i}}guez, John~A. Carroll, and David~J. Weir.
  2011.
\newblock \href{https://doi.org/10.1162/COLI\_a\_00060}{Dependency parsing
  schemata and mildly non-projective dependency parsing}.
\newblock {\em Computational Linguistics\/} 37(3):541--586.
\newblock
  \href{https://doi.org/10.1162/COLI\_a\_00060}{https://doi.org/10.1162/COLI\_a\_00060}.

\bibitem[{Gr\"{a}del et~al.(2005)Gr\"{a}del, Kolaitis, Libkin, Marx, Spencer,
  Vardi, Venema, and Weinstein}]{Gradel:2005:FMT:1206819}
Erich Gr\"{a}del, P.~G. Kolaitis, L.~Libkin, M.~Marx, J.~Spencer, Moshe~Y.
  Vardi, Y.~Venema, and Scott Weinstein. 2005.
\newblock {\em Finite Model Theory and Its Applications (Texts in Theoretical
  Computer Science. An EATCS Series)\/}.
\newblock Springer-Verlag New York, Inc., Secaucus, NJ, USA.

\bibitem[{Greibach(1973)}]{Greibach}
Sheila Greibach. 1973.
\newblock \href{https://doi.org/10.1137/0202025}{The hardest context-free
  language}.
\newblock {\em SIAM Journal on Computing\/} 2(4):304--310.
\newblock
  \href{https://doi.org/10.1137/0202025}{https://doi.org/10.1137/0202025}.

\bibitem[{Guruswami et~al.(2011)Guruswami, H\aa{}stad, Manokaran, Raghavendra,
  and Charikar}]{Guruswami-etal-2011}
Venkatesan Guruswami, Johan H\aa{}stad, Rajsekar Manokaran, Prasad Raghavendra,
  and Moses Charikar. 2011.
\newblock \href{https://doi.org/10.1137/090756144}{Beating the random ordering
  is hard: Every ordering {CSP} is approximation resistant}.
\newblock {\em SIAM Journal on Computing\/} 40(3):878–914.
\newblock
  \href{https://doi.org/10.1137/090756144}{https://doi.org/10.1137/090756144}.

\bibitem[{Hulden(2009)}]{FOMA}
Mans Hulden. 2009.
\newblock \href{http://www.aclweb.org/anthology/E09-2008}{{Foma}: a
  finite-state compiler and library}.
\newblock In {\em Proceedings of the Demonstrations Session at EACL 2009\/}.
  Association for Computational Linguistics, Athens, Greece, pages 29--32.
\newblock
  \href{http://www.aclweb.org/anthology/E09-2008}{http://www.aclweb.org/anthology/E09-2008}.

\bibitem[{Hulden(2011)}]{Hulden2011}
Mans Hulden. 2011.
\newblock \href{https://doi.org/10.1007/978-3-642-20095-3\_14}{Parsing {CFG}s
  and {PCFG}s with a {C}homsky-{S}ch{\"u}tzenberger representation}.
\newblock In Zygmunt Vetulani, editor, {\em Human Language Technology.
  Challenges for Computer Science and Linguistics: 4th Language and Technology
  Conference, LTC 2009, Poznan, Poland, November 6-8, 2009, Revised Selected
  Papers\/}, Springer Berlin Heidelberg, Berlin, Heidelberg, pages 151--160.
\newblock
  \href{https://doi.org/10.1007/978-3-642-20095-3\_14}{https://doi.org/10.1007/978-3-642-20095-3\_14}.

\bibitem[{Kaplan and Kay(1994)}]{Kaplan:1994:RMP:204915.204917}
Ronald~M. Kaplan and Martin Kay. 1994.
\newblock \href{http://dl.acm.org/citation.cfm?id=204915.204917}{Regular models
  of phonological rule systems}.
\newblock {\em Computational Linguistics\/} 20(3):331--378.
\newblock
  \href{http://dl.acm.org/citation.cfm?id=204915.204917}{http://dl.acm.org/citation.cfm?id=204915.204917}.

\bibitem[{Kuhlmann(2015)}]{Kuhlmann2015}
Marco Kuhlmann. 2015.
\newblock \href{https://arxiv.org/abs/1504.04993}{Tabulation of noncrossing
  acyclic digraphs}.
\newblock arXiv:1504.04993.
\newblock
  \href{https://arxiv.org/abs/1504.04993}{https://arxiv.org/abs/1504.04993}.

\bibitem[{Kuhlmann and Johnsson(2015)}]{Kuhlmann-Jonsson-2015}
Marco Kuhlmann and Peter Johnsson. 2015.
\newblock \href{http://aclweb.org/anthology/Q/Q15/Q15-1040.pdf}{Parsing to
  noncrossing dependency graphs}.
\newblock {\em Transactions of the Association for Computational Linguistics\/}
  3:559--570.
\newblock
  \href{http://aclweb.org/anthology/Q/Q15/Q15-1040.pdf}{http://aclweb.org/anthology/Q/Q15/Q15-1040.pdf}.

\bibitem[{Kuhlmann and Oepen(2016)}]{KuhOe2016}
Marco Kuhlmann and Stephan Oepen. 2016.
\newblock \href{https://doi.org/10.1162/COLI\_a\_00268}{Towards a catalogue of
  linguistic graph banks}.
\newblock {\em Computational Linguistics\/} 42(4):819--827.
\newblock
  \href{https://doi.org/10.1162/COLI\_a\_00268}{https://doi.org/10.1162/COLI\_a\_00268}.

\bibitem[{Lang(1994)}]{LA92}
Bernard Lang. 1994.
\newblock
  \href{http://onlinelibrary.wiley.com/doi/10.1111/j.1467-8640.1994.tb00011.x/full}{Recognition
  can be harder than parsing}.
\newblock {\em Computational Intelligence\/} 10(4):486--494.
\newblock
  \href{http://onlinelibrary.wiley.com/doi/10.1111/j.1467-8640.1994.tb00011.x/full}{http://onlinelibrary.wiley.com/doi/10.1111/j.1467-8640.1994.tb00011.x/full}.

\bibitem[{Lange(1997)}]{Lange-1997}
Klaus-J\"orn Lange. 1997.
\newblock \href{http://dl.acm.org/citation.cfm?id=695352}{An unambiguous class
  possessing a complete set}.
\newblock In Morvan Reischuk, editor, {\em STACKS'97 Proceedings\/}. Springer,
  volume 1200 of {\em Lecture Notes in Computer Science\/}.
\newblock
  \href{http://dl.acm.org/citation.cfm?id=695352}{http://dl.acm.org/citation.cfm?id=695352}.

\bibitem[{Marcus(1967)}]{Marcus1967}
S.~Marcus. 1967.
\newblock {\em Algebraic Linguistics; Analytical Models\/}, volume~29 of {\em
  Mathematics in Science and Engineering\/}.
\newblock Academic Press, New York and London.

\bibitem[{McDonald et~al.(2005)McDonald, Pereira, Ribarov, and
  Hajic}]{McDonald}
Ryan McDonald, Fernando Pereira, Kiril Ribarov, and Jan Hajic. 2005.
\newblock
  \href{http://www.aclweb.org/anthology/H/H05/H05-1066.pdf}{Non-projective
  dependency parsing using spanning tree algorithms}.
\newblock In {\em Proceedings of Human Language Technology Conference and
  Conference on Empirical Methods in Natural Language Processing\/}.
  Association for Computational Linguistics, Vancouver, British Columbia,
  Canada, pages 523--530.
\newblock
  \href{http://www.aclweb.org/anthology/H/H05/H05-1066.pdf}{http://www.aclweb.org/anthology/H/H05/H05-1066.pdf}.

\bibitem[{Nederhof and Satta(2003)}]{Nederhof-2003}
Mark-Jan Nederhof and Giorgio Satta. 2003.
\newblock Probabilistic parsing as intersection.
\newblock In {\em 8th International Workshop on Parsing Technologies\/}. LORIA,
  Nancy, France, pages 137--148.

\bibitem[{{OEIS Foundation Inc.}(2017)}]{OEIS}
{OEIS Foundation Inc.} 2017.
\newblock The on-line encyclopedia of integer sequences.
\newblock \url{http://oeis.org}, read on 15 January 2017.

\bibitem[{Oepen et~al.(2015)Oepen, Kuhlmann, Miyao, Zeman, Cinkova, Flickinger,
  Hajic, and Uresova}]{oepen-EtAl:2015:SemEval}
Stephan Oepen, Marco Kuhlmann, Yusuke Miyao, Daniel Zeman, Silvie Cinkova, Dan
  Flickinger, Jan Hajic, and Zdenka Uresova. 2015.
\newblock \href{http://www.aclweb.org/anthology/S15-2153}{Semeval 2015 task 18:
  Broad-coverage semantic dependency parsing}.
\newblock In {\em Proceedings of the 9th International Workshop on Semantic
  Evaluation (SemEval 2015)\/}. Association for Computational Linguistics,
  Denver, Colorado, pages 915--926.
\newblock
  \href{http://www.aclweb.org/anthology/S15-2153}{http://www.aclweb.org/anthology/S15-2153}.

\bibitem[{Oflazer(2003)}]{DBLP:journals/coling/Oflazer03}
Kemal Oflazer. 2003.
\newblock \href{https://doi.org/10.1162/089120103322753338}{Dependency parsing
  with an extended finite-state approach}.
\newblock {\em Computational Linguistics\/} 29(4):515--544.
\newblock
  \href{https://doi.org/10.1162/089120103322753338}{https://doi.org/10.1162/089120103322753338}.

\bibitem[{Pitler et~al.(2013)Pitler, Kannan, and Marcus}]{Pitler2013}
Emily Pitler, Sampath Kannan, and Mitchell Marcus. 2013.
\newblock \href{http://aclweb.org/anthology/Q13-1002}{Finding optimal
  1-endpoint-crossing trees}.
\newblock {\em Transactions of the Association for Computational Linguistics\/}
  1:13--24.
\newblock
  \href{http://aclweb.org/anthology/Q13-1002}{http://aclweb.org/anthology/Q13-1002}.

\bibitem[{Rebane and Pearl(1987)}]{Rebane-Pearl-1987}
George Rebane and Judea Pearl. 1987.
\newblock \href{http://dl.acm.org/citation.cfm?id=3023784}{The recovery of
  causal poly-trees from statistical data}.
\newblock In {\em Proceedings of the 3rd Annual Conference on Uncertainty in
  Artificial Intelligence (UAI 1987)\/}. Seattle, WA, pages 222--228.
\newblock
  \href{http://dl.acm.org/citation.cfm?id=3023784}{http://dl.acm.org/citation.cfm?id=3023784}.

\bibitem[{Roche(1996)}]{Roche:1996:TPF:974697.974706}
Emmanuel Roche. 1996.
\newblock \href{https://doi.org/10.1017/S1351324997001605}{Transducer parsing
  of free and frozen sentences}.
\newblock {\em Natural Language Engineering\/} 2(4):345--350.
\newblock
  \href{https://doi.org/10.1017/S1351324997001605}{https://doi.org/10.1017/S1351324997001605}.

\bibitem[{Schluter(2014)}]{Schluter2014}
Natalie Schluter. 2014.
\newblock \href{http://www.aclweb.org/anthology/W/W14/W14-2412.pdf}{On maximum
  spanning {DAG} algorithms for semantic {DAG} parsing}.
\newblock In {\em Proceedings of the ACL 2014 Workshop on Semantic Parsing\/}.
  Association for Computational Linguistics, Baltimore, MD, pages 61--65.
\newblock
  \href{http://www.aclweb.org/anthology/W/W14/W14-2412.pdf}{http://www.aclweb.org/anthology/W/W14/W14-2412.pdf}.

\bibitem[{Schluter(2015)}]{Schluter2015}
Natalie Schluter. 2015.
\newblock \href{http://www.aclweb.org/anthology/S15-1031}{The complexity of
  finding the maximum spanning {DAG} and other restrictions for {DAG} parsing
  of natural language}.
\newblock In {\em Proceedings of the Fourth Joint Conference on Lexical and
  Computational Semantics\/}. Association for Computational Linguistics,
  Denver, Colorado, pages 259--268.
\newblock
  \href{http://www.aclweb.org/anthology/S15-1031}{http://www.aclweb.org/anthology/S15-1031}.

\bibitem[{Yli-Jyr\"a(2004)}]{YJ2004COLING}
Anssi Yli-Jyr\"a. 2004.
\newblock
  \href{https://www.aclweb.org/anthology/W/W04/W04-1504.pdf}{Axiomatization of
  restricted non-projective dependency trees through finite-state constraints
  that analyse crossing bracketings}.
\newblock In Geert-Jan~M. Kruijff and Denys Duchier, editors, {\em COLING 2004
  Recent Advances in Dependency Grammar\/}. COLING, Geneva, Switzerland, pages
  25--32.
\newblock
  \href{https://www.aclweb.org/anthology/W/W04/W04-1504.pdf}{https://www.aclweb.org/anthology/W/W04/W04-1504.pdf}.

\bibitem[{Yli{-}Jyr{\"{a}}(2005)}]{DBLP:journals/ijfcs/Yli-Jyra05}
Anssi Yli{-}Jyr{\"{a}}. 2005.
\newblock \href{https://doi.org/10.1142/S0129054105003169}{Approximating
  dependency grammars through intersection of star-free regular languages}.
\newblock {\em Int. J. Found. Comput. Sci.\/} 16(3):565--579.
\newblock
  \href{https://doi.org/10.1142/S0129054105003169}{https://doi.org/10.1142/S0129054105003169}.

\bibitem[{Yli{-}Jyr{\"{a}}(2012)}]{DBLP:conf/birthday/Yli-Jyra12}
Anssi Yli{-}Jyr{\"{a}}. 2012.
\newblock \href{https://doi.org/10.1007/978-3-642-30773-7\_10}{On dependency
  analysis via contractions and weighted {FST}s}.
\newblock In Diana Santos, Krister Lind{\'{e}}n, and Wanjiku Ng'ang'a, editors,
  {\em Shall We Play the Festschrift Game?, Essays on the Occasion of Lauri
  Carlson's 60th Birthday\/}. Springer, pages 133--158.
\newblock
  \href{https://doi.org/10.1007/978-3-642-30773-7\_10}{https://doi.org/10.1007/978-3-642-30773-7\_10}.

\bibitem[{Yli-Jyr\"a et~al.(2012)Yli-Jyr\"a, Piitulainen, and
  Voutilainen}]{YJ2012FSMNLP}
Anssi Yli-Jyr\"a, Jussi Piitulainen, and Atro Voutilainen. 2012.
\newblock \href{http://www.aclweb.org/anthology/W12-6218}{Refining the design
  of a contracting finite-state dependency parser}.
\newblock In I\~naki Alegria and Mans Hulden, editors, {\em Proceedings of the
  10th International Workshop on Finite State Methods and Natural Language
  Processing\/}. Association for Computational Linguistics, Donostia--San
  Sebasti\'an, Spain, pages 108--115.
\newblock
  \href{http://www.aclweb.org/anthology/W12-6218}{http://www.aclweb.org/anthology/W12-6218}.

\bibitem[{Zhang and Nivre(2011)}]{Zhang2011rich}
Yue Zhang and Joakim Nivre. 2011.
\newblock \href{http://www.aclweb.org/anthology/P11-2033}{Transition-based
  dependency parsing with rich non-local features}.
\newblock In {\em Proceedings of the 49th Annual Meeting of the Association for
  Computational Linguistics: Human Language Technologies\/}. Association for
  Computational Linguistics, Portland, Oregon, USA, pages 188--193.
\newblock
  \href{http://www.aclweb.org/anthology/P11-2033}{http://www.aclweb.org/anthology/P11-2033}.

\end{thebibliography}
\end{document}